\renewcommand{\l}{\left}
\renewcommand{\r}{\right}
\newcommand{\given}{\text{ } \middle | \text{ }}
\newcommand{\argmax}{\operatornamewithlimits{argmax}}
\newcommand{\argmin}{\operatornamewithlimits{argmin}}
\newcommand{\conv}{\operatorname{conv}}
\newcommand{\conc}{\operatorname{conc}}
\newcommand{\temph}[1]{{\textsl{#1}}}
\DeclareSymbolFont{AMSb}{U}{msb}{m}{n}
\DeclareMathSymbol{\N}{\mathbin}{AMSb}{"4E}
\DeclareMathSymbol{\ZZ}{\mathbin}{AMSb}{"5A}
\DeclareMathSymbol{\Y}{\mathbin}{AMSb}{"59}
\DeclareMathSymbol{\U}{\mathbin}{AMSb}{"55}
\DeclareMathSymbol{\RR}{\mathbin}{AMSb}{"52}
\DeclareMathSymbol{\Q}{\mathbin}{AMSb}{"51}
\DeclareMathSymbol{\Prob}{\mathbin}{AMSb}{"50}
\DeclareMathSymbol{\HHH}{\mathbin}{AMSb}{"48}
\DeclareMathSymbol{\I}{\mathbin}{AMSb}{"49}
\DeclareMathSymbol{\C}{\mathbin}{AMSb}{"43}
\DeclareMathSymbol{\E}{\mathbin}{AMSb}{"45}
\DeclareMathSymbol{\C}{\mathbin}{AMSb}{"43}
\DeclareMathSymbol{\D}{\mathbin}{AMSb}{"44}
\DeclareMathSymbol{\OO}{\mathbin}{AMSb}{"4F}
\DeclareMathSymbol{\X}{\mathbin}{AMSb}{"58}
\DeclareMathSymbol{\LL}{\mathbin}{AMSb}{"4C}
\newcommand{\Fl}{\mathcal{F}}
\newcommand{\A}{\mathcal{A}}
\newcommand{\Cl}{\mathcal{C}}
\newcommand{\Z}{\mathcal{Z}}
\newcommand{\tla}{\tilde{\lambda}}
\newtheorem{assumption}{Assumption}
\begin{document}

\title{Response-Based Approachability and its Application to
Generalized No-Regret Algorithms}

\author{\name
Andrey Bernstein \email aberenstein@gmail.com \\
Department of Electrical Engineering \\
Technion -- Israel Institute of Technology\\
Haifa 32000, ISRAEL
\AND
Nahum Shimkin \email shimkin@ee.technion.ac.il \\
Department of Electrical Engineering \\
Technion -- Israel Institute of Technology\\
Haifa 32000, ISRAEL
}


\editor{[Submitted for review, October 2013]}

\maketitle

\begin{abstract}
Approachability theory, introduced by Blackwell (1956), provides fundamental
results on repeated games with vector-valued payoffs, and has been usefully applied
since in the theory of learning in games and to learning algorithms in the online
adversarial setup.
Given a repeated game with vector payoffs, a target set $S$ is
approachable by a certain player (the agent) if he can ensure that the average payoff vector converges to that set no matter what his adversary opponent does.
Blackwell provided two equivalent sets of conditions for a convex set
to be approachable. The first (primary) condition is a geometric separation condition,
while the second (dual) condition requires that the set be {\em non-excludable},
namely that for every mixed action of the opponent there exists a mixed action of the
agent (a {\em response}) such that the resulting payoff vector belongs to $S$.
Existing approachability algorithms rely on
the primal condition and essentially require to compute at each stage a projection
direction from a given point to $S$.
In this paper, we introduce an approachability algorithm that
relies on Blackwell's {\em dual} condition. Thus, rather than projection, the algorithm relies
on computation of the response to a certain action of the opponent at each
stage.
The utility of the proposed algorithm is demonstrated by applying it to certain
generalizations of the classical regret minimization problem, which include regret minimization
with side constraints and regret minimization for global cost functions. In these problems,  computation of the required projections is generally complex but a response
is readily obtainable.
\end{abstract}

\section{Introduction} \label{sec:intro}

Consider a repeated matrix game with \emph{vector-valued} rewards that is played by two players,
the {\em agent} and the {\em adversary} or {\em opponent}.
In a learning context the agent may represent the learning algorithm, while the adversary stands
for an arbitrary or unpredictable learning environment.
For each pair of simultaneous actions $a$ and $z$ (of the agent and the opponent, respectively)
in the one-stage game, a reward
vector $r(a,z)\in \RR^{\ell}$, $\ell \geq 1$, is obtained. In Blackwell's approachability problem \citep{lit:black}, the agent's goal is to ensure that the long-term average reward vector {\em approaches} a given target set $S$, namely converges to $S$ almost surely in the point-to-set distance.
If that convergence can be ensured irrespectively of the opponent's strategy, the set
$S$ is said to be {\em approachable}, and a strategy of the agent that satisfies this property
is an approaching strategy (or algorithm) for $S$.

Blackwell's approachability results have been broadly used in theoretical work on learning in games,
including equilibrium analysis in repeated games with incomplete information \citep{lit:aumann},
calibrated forecasting \citep{lit:Foster99},
and convergence to correlated equilibria \citep{lit:sergiu1}.
The earliest application, however, concerned the
notion of \emph{regret minimization}, or \emph{no-regret strategies}, that was introduced in \citet{lit:hannan}. Even before Hannan's paper was published, it was shown in \citet{lit:black2}
that regret minimization can be formulated as a particular approachability problem,
which led to a distinct class of no-regret strategies.
More recently, approachability was used in \citet{lit:rust} to prove an extended no-regret result
for games with imperfect monitoring, while \citet{lit:sergiu} proposed an alternative formulation
of no-regret as an approachability problem (see Section \ref{sec:app}).
An extensive overview of approachability and no-regret in the context of learning is games
can be found in \citet{FudenbergLevine1998}, \citet{PaytonYoung2004}, and \citet{lit:pred}.
The latter monograph also makes the connection to the modern theory of on-line learning
and prediction algorithms.
In a somewhat different direction, approachability theory was applied in
\citet{lit:mannor04} to a problem of multi-criterion reinforcement
learning in an arbitrarily-varying environment.

Standard approachability algorithms require, at each stage of the game, the computation the direction
from the current average reward vector to a closest point in the target set $S$.
This is implied by Blackwell's \emph{primal} geometric separation condition, which is a sufficient condition for approachability of a target set.
For \emph{convex} sets, this step is equivalent to computing the \emph{projection direction} of the average reward onto $S$.
In this paper, we introduce an approachability algorithm that avoids this projection computation step.
Instead, the algorithm relies on availability of a {\em response map}, that assigns to each
mixed action $q$ of the opponent a mixed action $p$ of the agent so that $r(p,q)$, the expected reward
vector under these two mixed actions, is in $S$.
Existence of such a map is based on the Blackwell's \emph{dual} condition, which is
also a necessary and sufficient condition for approachability of a convex target set.

The idea of constructing an approachable set in terms of a general response map
was employed in \citet{lit:lehrer_solan07} (updated in \cite{lit:lehrer_solan13}), in the context of internal no-regret strategies.
An explicit approachability algorithm which is based on computing the response to
{\em calibrated  forecasts} of the opponent's actions has been proposed in
\citet{lit:perchet}, and further analyzed in \citet{lit:me_MOR}.
However, the algorithms in these papers are essentially based on the computation of calibrated  forecasts of the opponent's actions,
a task which is known to be computationally hard \citep{lit:cal_hp_hard}.
In contrast, the algorithm proposed in the present paper operates strictly in the payoff space, similarly to Blackwell's approachability algorithm.

The main motivation for the proposed algorithm comes from certain generalizations of the
basic no-regret problem, where the set to be approached is complex
so that computing the projection direction may be hard, while the response map is explicit
by construction.
These generalizations include the constrained regret minimization problem \citep{lit:shie_constr}, regret minimization with global cost functions \citep{lit:global}, regret minimization in variable duration repeated games \citep{lit:nahum_var}, and regret minimization in stochastic game models \citep{lit:bayes_env}. In these cases, the computation of a response reduces to computing a \emph{best-response} in the underlying regret minimization problem, and hence can be carried out efficiently. The application of our algorithm to some of these problems is discussed in Section \ref{sec:applic_rc} of this paper.


The paper proceeds as follows.
In Section \ref{sec:app} we review the approachability problem and existing approachability algorithms,
and illustrate the formulation of standard no-regret problems as approachability problems.
Section \ref{sec:algo} presents our basic algorithm and  establishes its approachability properties.
In Section \ref{sec:variants}, we provide an interpretation of certain aspects of the proposed algorithm,
and propose some variants and extensions to the basic algorithm.
Section \ref{sec:applic_rc} applies the proposed algorithms to generalized no-regret problems, including constrained regret minimization and online learning with global cost functions. We conclude the paper in Section \ref{sec:conc}.

\section{Review of Approachability and Related No-Regret Algorithms} \label{sec:app}

In this Section, we present the approachability problem and review Blackwell's approachability conditions. We further discuss existing approachability algorithms, and illustrate the application of the approachability framework to classical regret minimization problems.

\subsection{Approachability Theory}

Consider a repeated two-person matrix game, played between an agent and an arbitrary opponent.
The agent chooses its actions from a finite set $\A$,
while the opponent chooses its actions from a finite set $\Z$. At
each time instance $n = 1, 2, ...$, the agent selects its action
$a_n \in \A$, observes the action $z_n \in \Z$ chosen by the
opponent, and obtains a \emph{vector-valued} reward $R_n = r(a_n, z_n) \in
\RR^{\ell}$, where $\ell \geq 1$, and $r:\A \times \Z \rightarrow \RR^{\ell}$ is a given
reward function. The average reward vector obtained by the agent up to time $n$ is then
$
\bar{R}_n = n^{-1}\sum_{k = 1}^{n} R_k.
$
A \emph{mixed} action of the agent is a probability vector $p
\in \Delta(\A)$, where $p(a)$ specifies the probability of choosing
action $a \in \A$, and $\Delta(\A)$ denotes the set of probability vectors
over $\A$ . Similarly, $q \in \Delta(\Z)$ denotes a mixed action of the opponent. Let $\bar{q}_n \in
\Delta(\Z)$ denote the empirical distribution of the opponent's
actions at time $n$, namely
$$
\bar{q}_n(z) \triangleq \frac{1}{n} \sum_{k=1}^n \I\l\{z_n = z\r\}, \quad z \in \Z,
$$
where $\I$ is indicator function. Further define the Euclidean span of the reward vector as
\begin{equation} \label{asm:r_bound}
\rho \triangleq \max_{a,z,a',z'} \l\|r(a, z) - r(a', z') \r\|,
\end{equation}
where $\l\| \cdot \r\|$ is the Euclidean norm. The inner product between two vectors $v \in \RR^{\ell}$ and $w \in \RR^{\ell}$ is denoted by $v \cdot w$.

In what follows, we find it convenient to use the notation
$$
r(p, q) \triangleq \sum_{a \in \A, z \in \Z} p(a) q(z) r(a, z)
$$
for the expected reward under mixed actions $p \in \Delta(\A)$
and $q \in \Delta(\Z)$; the distinction between $r(a,z)$ and $r(p,q)$ should be clear from their arguments. Occasionally, we will use
$
r(p, z) = \sum_{a \in \A} p(a) r(a, z)
$
for the expected reward under mixed action $p \in \Delta(\A)$
and pure action $z \in \Z$.
The notation $r(a, q)$ is to be interpreted similarly.

Let
$$
h_{n} \triangleq \l\{a_1, z_1, ..., a_{n}, z_{n}\r\}
\in \l(\A \times \Z\r)^{n}
$$
denote the history of the game up to (and including) time $n$. A \emph{strategy} $\pi=(\pi_n)$
of the agent is a collection of decision rules $\pi_n:\l(\A \times \Z\r)^{n-1} \rightarrow \Delta(\A)$, $n \geq 1$, where each mapping $\pi_n$
specifies a mixed action $p_n = \pi_n(h_{n-1})$  for the agent at time $n$.
The agent's pure action $a_n$ is sampled from $p_n$.
Similarly, the opponent's strategy is denoted by $\sigma = (sigma_n)$, with $\sigma_n:\l(\A \times \Z\r)^{n-1} \rightarrow \Delta(\Z)$.
Let $\Prob^{\pi, \sigma}$ denote the probability measure on $\l(\A \times \Z\r)^{\infty}$
induced by the strategy pair $(\pi,\sigma)$.

Let $S$ be a given target set. Below is the classical definition of an approachable set from \citet{lit:black}.

\begin{definition}[Approachable Set] \label{def:appr_set}
A closed set $S \subseteq \RR^{\ell}$ is
\temph{approachable by the agent's strategy $\pi$}
if the average reward $\bar{R}_n = n^{-1}\sum_{k = 1}^{n} R_k$ converges to $S$ in the Euclidian point-to-set distance $d(\cdot, S)$, almost surely for every strategy $\sigma$ of the opponent, at a uniform rate over all strategies $\sigma$ of the opponent.
That is,
for every $\epsilon > 0$ there is an integer $N$ such that, for every strategy $\sigma$ of the opponent,
\[
\Prob^{\pi, \sigma} \l\{d\l(\bar{R}_n, S\r) \geq \epsilon \text{ for some } n \geq N \r\} < \epsilon.
\]
The set $S$ is \temph{approachable} if there exists such a strategy for the agent.
\end{definition}

In the sequel, we find it convenient to state most of our results in terms of the \emph{expected} average reward, where expectation is applied only to the agent's mixed actions:
\[
\bar{r}_n = \frac{1}{n}\sum_{k = 1}^n r_k \triangleq \frac{1}{n}\sum_{k = 1}^n r(p_k, z_k).
\]
With this modified reward, the stated convergence results will be shown to hold \emph{pathwise}, for any possible sequence of the opponent's actions.
See, e.g., Theorem \ref{theo:algo}, where we show that
$d(\bar{r}_n,S)\leq \frac{\rho}{\sqrt{n}}$ for all $n$.
The corresponding almost sure convergence for the actual average reward $\bar{R}_n$ can be easily deduced using martingale convergence theory.  Indeed, note that
\[
d\l(\bar{R}_n , S\r) \leq \l\|\bar{R}_n  - \bar{r}_n \r\| + d\l(\bar{r}_n, S\r).
\]
 But the first term is the norm of the mean of the \emph{vector}
 martingale difference sequence $D_k = r(a_k, z_k) - r(p_k, z_k)$. This can be easily shown to converge to zero at a uniform rate of $O\l(1/\sqrt{n}\r)$, 
 using standard results (e.g., from \citet{lit:shir}); see for instance \cite{lit:appr_stoch}, Proposition 4.1. In particular, it can be shown that there exists a finite constant $K$ so that for each $\delta > 0$
\[
\l\|\bar{R}_n  - \bar{r}_n \r\| \leq \frac{K \log (1/\delta)}{\sqrt{n}}
\]
with probability at least $1 - \delta$.

Next, we present a formulation of Blackwell's results \citep{lit:black} which provides us with
conditions for approachability of general and convex sets.
To this end, for any $x \notin S$, let $c(x) \in S$ denote a closest point in $S$ to $x$. Also, for any $p \in
\Delta(\A)$ let $T(p) \triangleq \l\{r(p, q): \text{ } q \in \Delta(\Z) \r\}$, which coincides with  the convex hull of the vectors $\l\{r(p, z)\r\}_{z \in \Z}$.
\begin{definition}\label{def:B-D-sets}
\begin{itemize}
\item[]
\item[(i)] \textbf{B-sets:}
A closed set $S \subseteq \RR^{\ell}$ will be called a \temph{B-set} (where \temph{B} stands for \temph{Blackwell}) if for every $x \notin S$ there exists a mixed action $p^* = p^*(x) \in \Delta(\A)$ such that the hyperplane through $y = c(x)$ perpendicular to the line segment $xy$, separates $x$ from $T(p^*)$.
\item[(ii)] \textbf{D-sets:}
A closed set $S \subseteq \RR^{\ell}$ will be called a \temph{D-set} (where \temph{D} stands for \temph{Dual}) if for every $q \in \Delta(\Z)$ there exists a mixed action $p \in \Delta(\A)$ so that $r(p, q) \in S$. We shall refer to
such $p$ as an \emph{$S$-response} (or just \emph{response}) of the agent to $q$.
\end{itemize}
\end{definition}

\begin{theorem}\label{theo:appr_primal}
\begin{enumerate}
\item[]
\item[(i)] \textbf{Primal Condition and Algorithm.}
 A B-set is approachable, by using at time $n$ the mixed action $p^*(\bar{r}_{n-1})$ whenever $\bar{r}_{n-1} \notin S$. If $\bar{r}_{n-1} \in S$, an arbitrary action can be used.
\item[(ii)] \textbf{Dual Condition.} A closed set $S$ is approachable only if it is a D-set.
\item[(iii)] \textbf{Convex Sets.} Let $S$ be a closed \temph{convex} set. Then, the
following statements are equivalent: (a) $S$ is approachable, (b) $S$ is a B-set, (c) $S$ is a D-set.
\end{enumerate}
\end{theorem}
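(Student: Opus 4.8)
The plan is to establish the three parts in the order (i), (ii), (iii), since part (iii) is assembled from the first two together with one extra geometric implication. For part (i) I would use the squared point-to-set distance as a potential function. Set $y_{n-1}=c(\bar r_{n-1})$ and write $\bar r_n=\frac{n-1}{n}\bar r_{n-1}+\frac1n r_n$, where $r_n=r(p^*(\bar r_{n-1}),z_n)\in T(p^*(\bar r_{n-1}))$. The defining B-set property then supplies the separation inequality $(\bar r_{n-1}-y_{n-1})\cdot(r_n-y_{n-1})\le 0$. Expanding $\|\bar r_n-y_{n-1}\|^2$, discarding the cross term via this inequality, and bounding $\|r_n-\bar r_{n-1}\|\le\rho$ (both points lie in the convex hull of the reward vectors, whose diameter is $\rho$), I expect to arrive at the recursion $n^2\,d(\bar r_n,S)^2\le (n-1)^2\,d(\bar r_{n-1},S)^2+\rho^2$. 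Telescoping from $n=1$ gives the pathwise bound $d(\bar r_n,S)\le \rho/\sqrt n$ promised in Theorem~\ref{theo:algo}; the almost-sure, uniform-rate convergence of the realized average $\bar R_n$ demanded by Definition~\ref{def:appr_set} then follows from the vector martingale estimate already sketched after that definition.

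For part (ii) I would argue by contraposition: assume $S$ is not a D-set and exhibit an opponent strategy defeating approachability. Failure of the dual condition means some fixed $q^*\in\Delta(\Z)$ has $r(p,q^*)\notin S$ for every $p$, so the compact convex set $T^*=\{r(p,q^*):p\in\Delta(\A)\}$ is disjoint from the closed set $S$ and $\delta:=d(T^*,S)>0$. Let the opponent draw $z_n$ i.i.d.\ from $q^*$. Decomposing $\bar R_n=\frac1n\sum_k r(a_k,q^*)+\frac1n\sum_k V_k$ with $V_k=r(a_k,z_k)-r(a_k,q^*)$, the $V_k$ form a bounded martingale-difference sequence, so $\frac1n\sum_k V_k\to 0$ almost surely, while $\frac1n\sum_k r(a_k,q^*)\in T^*$. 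Hence $d(\bar R_n,S)\ge\delta-\|\frac1n\sum_k V_k\|$ stays bounded away from $0$ almost surely, for any agent strategy, contradicting approachability.

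For part (iii), (b)$\Rightarrow$(a) is exactly part (i) and (a)$\Rightarrow$(c) is part (ii), so only (c)$\Rightarrow$(b) remains, and this is where convexity enters. Fix $x\notin S$, let $y=c(x)$ and $\lambda=x-y\neq 0$. Convexity of $S$ makes $y$ the metric projection, giving the supporting-hyperplane inequality $\lambda\cdot(s-y)\le 0$ for all $s\in S$; meanwhile the required separation of $x$ from $T(p^*)$ is equivalent to $\max_{q}\lambda\cdot r(p^*,q)\le\lambda\cdot y$. For each $q$ the D-set property yields a response $p$ with $r(p,q)\in S$, so $\min_p\lambda\cdot r(p,q)\le\lambda\cdot y$, and taking the maximum over $q$ gives $\max_q\min_p\lambda\cdot r(p,q)\le\lambda\cdot y$. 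Applying von Neumann's minimax theorem to the bilinear scalar payoff $\lambda\cdot r(p,q)$ over the simplices $\Delta(\A),\Delta(\Z)$ converts this into $\min_p\max_q\lambda\cdot r(p,q)\le\lambda\cdot y$, and the minimizing $p^*$ is precisely the mixed action realizing the separation. Thus $S$ is a B-set.

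The main obstacle is the geometric heart of (c)$\Rightarrow$(b): correctly coupling the convex-projection inequality with the minimax interchange so that the per-opponent-action response guaranteed by the dual condition collapses to a single separating action $p^*$ valid against all $q$ simultaneously. By comparison the telescoping in (i) is routine once the separation inequality is written down, and (ii) is a standard concentration argument; the one point needing mild care there is the passage from the pathwise bound on $\bar r_n$ to the almost-sure uniform-rate statement for $\bar R_n$, which the martingale remark in the excerpt already covers.
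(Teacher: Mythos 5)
Your proposal is sound, but note that the paper does not actually prove this theorem: it is stated as a citation of Blackwell's 1956 results, so there is no in-paper argument to compare against. What you have written is essentially the classical proof, and all three parts are correct. Part (i) is Blackwell's potential-function argument; the one bookkeeping point to watch is that if you expand $\l\|\bar r_n - y_{n-1}\r\|^2$ around the decomposition $\frac{n-1}{n}(\bar r_{n-1}-y_{n-1})+\frac{1}{n}(r_n-y_{n-1})$, the quadratic term is $\l\|r_n-y_{n-1}\r\|^2/n^2$ rather than $\l\|r_n-\bar r_{n-1}\r\|^2/n^2$, and $y_{n-1}$ need not lie in the convex hull of the reward vectors; the clean fix is to expand around $\bar r_{n-1}$ instead, which gives the cross term $\frac{2}{n}(\bar r_{n-1}-y_{n-1})\cdot(r_n-\bar r_{n-1})\leq -\frac{2}{n}\,d(\bar r_{n-1},S)^2$ and the quadratic term $\l\|r_n-\bar r_{n-1}\r\|^2/n^2\leq\rho^2/n^2$, from which your recursion $n^2 d_n^2\leq (n-1)^2 d_{n-1}^2+\rho^2$ follows (this is exactly the computation the paper does perform in Lemma~\ref{lem:dist_bound} for the analogous quantity $\lambda_n$). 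Part (ii) by contraposition with an i.i.d.\ opponent playing the excluding $q^*$, and part (iii) via the projection inequality for convex sets combined with the minimax theorem to produce a single separating $p^*$, are both the standard arguments and are correct as stated; the minimax step in (c)$\Rightarrow$(b) is precisely the mechanism the paper reuses in the proof of Theorem~\ref{theo:algo}.
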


We note that the approachability algorithm in Theorem \ref{theo:appr_primal}$(i)$ is valid also if $\bar{r}_n$ in the primal condition is replaced with $\bar{R}_n$. In addition, Theorem \ref{theo:appr_primal} has the following Corollary.
\begin{corollary} \label{col:d_set}
The \temph{convex hull} of a D-set is approachable (and is also a B-set).
\end{corollary}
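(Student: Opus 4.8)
The plan is to prove that $\conv(S)$ is a D-set and then invoke the equivalence in Theorem~\ref{theo:appr_primal}(iii), which asserts that for a closed convex set approachability, the B-set property, and the D-set property all coincide. The crucial observation is that the response map is inherited for free: since $S \subseteq \conv(S)$, any $S$-response $p$ to an opponent mixed action $q$ (which exists because $S$ is a D-set) satisfies $r(p,q) \in S \subseteq \conv(S)$, and is therefore a $\conv(S)$-response as well. Hence $\conv(S)$ satisfies the defining property of Definition~\ref{def:B-D-sets}(ii) using precisely the responses already guaranteed for $S$, with no new computation or construction required.

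The one technical point is that Definition~\ref{def:B-D-sets} and Theorem~\ref{theo:appr_primal}(iii) are stated for \emph{closed} sets, whereas the convex hull of a closed set need not be closed. To handle this I would first restrict attention to the compact reachable region $K \triangleq \conv\{r(a,z): a \in \A,\, z \in \Z\}$, which contains every attainable reward vector $r(p,q)$ by construction. Since every response lands in $S \cap K$, the set $S \cap K$ is itself a D-set; being closed and bounded it is compact, so $\conv(S \cap K)$ is compact and in particular closed. Reusing the inherited responses exactly as above, $\conv(S \cap K)$ is then a closed convex D-set, and Theorem~\ref{theo:appr_primal}(iii) applies directly to yield that it is approachable and a B-set.

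Finally I would transfer the conclusion from $\conv(S \cap K)$ to $\conv(S)$ by the monotonicity of approachability: since $\conv(S \cap K) \subseteq \conv(S)$, we have $d(\bar{r}_n, \conv(S)) \leq d(\bar{r}_n, \conv(S \cap K))$ for all $n$, so any strategy approaching the smaller set also approaches the larger one. This gives approachability of $\conv(S)$. When $\conv(S)$ is itself closed (for instance whenever $S$ is bounded), it is moreover a B-set by the same equivalence applied directly to it.

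I expect the only genuine obstacle to be the bookkeeping around closedness and the compact reachable set $K$; the substantive content --- that the convex hull reuses the same responses and is therefore a D-set --- is immediate, which is exactly why the result appears as a corollary of the dual characterization rather than as an independent theorem.
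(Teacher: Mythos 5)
Your proof is correct and follows essentially the same route as the paper, whose entire argument is the one-liner that the convex hull of a D-set is a convex D-set (responses are inherited since $S \subseteq \conv(S)$) and then Theorem~\ref{theo:appr_primal}(iii) applies. Your additional care about closedness of the convex hull, handled via the compact reachable region $K$, is a legitimate technical refinement that the paper silently glosses over, but it does not change the substance of the argument.
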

\proof
The convex hull of a D-set is a convex D-set. The claim then follows by Theorem \ref{theo:appr_primal}.
\endproof

 Since Blackwell's original construction, some other approachability algorithms that
 are based on similar geometric ideas have been proposed in the literature. \citet{lit:sergiu} proposed a class of
 approachability algorithms that use a general steering direction with separation properties. As shown there, this is essentially equivalent to the computation of the projection to the target set in some norm. When Euclidean norm is used, the resulting algorithm is equivalent to Blackwell's original scheme. Recently, \citet{ABH10} proposed an elegant scheme which generates the required steering
 directions through a no-regret algorithm (in the online convex programming framework).
We provide in the Appendix a somewhat simplified version of that algorithm which
is meant to clarify the geometric basis of the algorithm, which involves the
\emph{support function} of the target set.


We mention in passing some additional theoretical results and extensions.
\citet{lit:vieille92} studied the weaker notions of weak approachability and
excludability, and showed that these notions are complimentary even for non-convex sets.
\citet{lit:spinat} formulated a necessary and sufficient condition for approachability of \emph{general} (not necessarily convex) sets.
In \citet{lit:appr_stoch} and \citet{lit:milman}, approachability was extended to
stochastic (Markov) game models.
An extension of approachability theory to infinite dimensional reward spaces
was carried out in \citet{lit:lehrer}, while
\citet{lit:lehrer_solan09} considered approachability strategies with bounded memory.

Recently, \citet{lit:robust_appr} proposed a robust approachability algorithm for repeated games with partial monitoring and applied it to the corresponding regret minimization problem.

 In all these papers, at each time step, either the computation of the projection to the target set, or that of a steering direction with separation properties is required.

\subsection{Approachability and No-Regret Algorithms} \label{sec:no_regret}
We next present the problem of regret minimization in repeated matrix games, and show how these problems
can be formulated in terms of approachability with an appropriately defined reward vector and target set.
We start with Blackwell's original formulation, and
proceed to the alternative one by \citet{lit:sergiu}. In the final subsection, we consider briefly the more elaborate problem of {\em internal} regret minimization.
We will mainly emphasize the role of the dual condition and the simple computation of the {\em response} for these problems, and refer to the
respective references for details of the (primal) resulting algorithms.

Consider, as before, the agent that faces an arbitrarily varying environment (the opponent).
The repeated game model is the same as above, except that the vector reward function $r$ is replaced by a scalar reward (or utility) function $u:\A \times \Z \rightarrow \RR$. Let $\bar{U}_n \triangleq n^{-1} \sum_{k = 1}^n U_k$ denote the average reward by time $n$, and let
\[
U_n^*(z_1, ..., z_n) \triangleq \frac{1}{n} \max_{a \in \A} \sum_{k
= 1}^n u(a, z_k)
\]
denote the \emph{best reward-in-hindsight} of the agent after
observing $z_1, ..., z_n$. That is, $U_n^*$ is the maximal average
reward the agent could obtain at time $n$ if he knew the opponent's actions
beforehand and used a single fixed action. It is not hard to see
that the best reward-in-hindsight can be defined as a \emph{convex}
function $u^*$ of the empirical distribution $\bar{q}_n$ of the opponent's
actions:
\begin{equation}
\label{eqn:rih_basic} U_n^*(z_1, ..., z_n)
=\max_{a \in \A} u(a,\bar{q}_n) \triangleq u^*(\bar{q}_n).
\end{equation}
This motivates the definition of the average \emph{regret} as $(u^*(\bar{q}_n) - \bar{U}_n)$,
and the following definition of a no-regret algorithm:
\begin{definition}[No-Regret Algorithm] \label{def:no_regret}
We say that a strategy of the agent is a \temph{no-regret algorithm} (also termed a \temph{Hannan
Consistent} strategy) if
\[
\limsup_{n \rightarrow \infty} \l(u^*(\bar{q}_n) - \bar{U}_n \r) \leq 0,
\]
almost surely, for any strategy of the opponent.
\end{definition}

\subsubsection{Blackwell's Formulation} \label{sec:no_regret_Black}

Following Hannan's seminal paper, \citet{lit:black2} used approachability theory in order to elegantly show the existence of regret minimizing algorithms.
Define the vector-valued rewards $R_n \triangleq \l(U_n, \textbf{1}(z_n)\r) \in \RR \times \Delta(\Z)$, where $\textbf{1}(z)$ is the probability vector in $\Delta(\Z)$ supported on $z$.
The corresponding average reward is then $\bar{R}_n \triangleq n^{-1} \sum_{k = 1}^n R_k = \l(\bar{U}_n, \bar{q}_n\r)$.
Finally, define the target set
\[
S = \l\{(u, q) \in \RR \times \Delta(\Z): \text{ } u
\geq u^*(q)\r\}.
\]
It is easily verified that this set is a D-set: by construction, for each $q$ there exists an
$S$-response $p \in \argmax_{p \in \Delta(\Z)} u(p, q)$ so that $r(p, q) = (u(p, q), q) \in S$, namely $u(p, q) \geq u^*(q)$. Also, $S$ is a convex set by the convexity of $u^*(q)$ in $q$. Hence, by Theorem \ref{theo:appr_primal}, $S$ is approachable, and by the continuity of $u^*(q)$, an algorithm that approaches $S$ also minimizes the regret in the sense of Definition \ref{def:no_regret}. Application of Blackwell's approachability strategy
to the set $S$ therefore results in a no-regret algorithm.
We note that the required projection of the average reward vector onto $S$ is somewhat implicit in this formulation.

\subsubsection{Regret Matching} \label{sec:reg_match}
An alternative formulation, proposed in \citet{lit:sergiu}, leads to a a  simple and explicit no-regret algorithm for this problem.
Let
\begin{equation} \label{eqn:regrets}
L_n(a') \triangleq \frac{1}{n} \sum_{k = 1}^n \l( u (a', z_k) -
u (a_k, z_k)\r)
\end{equation}
denote the regret accrued due to not using action $a'$ constantly up to
time $n$.
The no-regret requirement in Definition \ref{def:no_regret} is then equivalent to
\begin{equation} \label{eqn:no_regret_goal}
\limsup_{n \rightarrow \infty}  L_n(a) \leq 0\, \quad a\in \A
\end{equation}
almost surely, for any strategy of the opponent. In turn, this goal is equivalent to the approachability of the the non-positive orthant $S= \mathbb{R}_{-}^{\A}$ in the game with
vector payoff $r=(r_{a'})\in\mathbb{R}^{\A}$, defined as
$r_{a'}(a,z)=u(a',z)-u(a,z)$.

To verify the dual condition, observe that $r_{a'}(p,q) = u(a',q)-u(p,q)$. Choosing
$p\in \argmax_p u(p,q)$ clearly ensures $r(p,q)\in S$, hence is an $S$-response to $q$ (in the sense
of Definition \ref{def:B-D-sets}(ii)), and $S$ is a D-set. Note that the response
here can always be taken as a pure action.

It was shown in \citet{lit:sergiu} that application of Blackwell's approachability strategy
in this formulation leads to the so-called \emph{regret matching} algorithm, where the probability of action $a$ at time step $n$ is given by:
\begin{equation} \label{eqn:appr_no_regret}
p_n(a) = \frac{\l[L_{n-1}(a)\r]_{+} }{\sum_{a'
\in \A} \l[L_{n-1}(a')\r]_{+} }.
\end{equation}
Here, $[x_a]_+ \triangleq \max\{x_a, 0\}$.
In fact, using their generalization of Blackwell's approachability strategies, the authors of that paper obtained a whole class of no-regret algorithms
with different weighting of the components of $L_n$.

\subsubsection{Internal Regret} \label{sec:reg_match_int}
We close this section with another application of approachability to the stronger notion of {\em internal} regret. Given a pair of different actions $a, a' \in \A$, suppose the agent were to replace action $a$ with $a'$ every time $a$ was played in the past. His reward at time $k = 1, ..., n$ would become:
\[
W_k(a, a') \triangleq
\begin{cases}
u(a', z_k), & \text{ if } a_k = a,\\
u(a_k, z_k), & \text{ otherwise.}
\end{cases}
\]
The \emph{internal} average regret of the agent for not playing $a'$ instead of $a$ is then given by
\begin{equation} \label{eqn:regrets_int}
I_n(a, a') \triangleq \frac{1}{n} \sum_{k = 1}^n \l( W_k(a, a') -
U_k\r).
\end{equation}
A \emph{no-internal-regret strategy} must ensure that
\begin{equation} \label{eqn:int_min}
\limsup_{n \rightarrow \infty} \max_{a, a' \in \A} I_n(a, a') \leq 0.
\end{equation}

To show existence of such strategies, define the vector-valued reward function $r(a, z) \in \RR^{\A\times \A}$ by setting its $(a_1, a_2)$ coordinate to
\[
r_{a_1, a_2}(a, z) \triangleq
\begin{cases}
u(a_2, z) - u(a_1, z), & \text{ if } a = a_1, \\
0, & \text{ otherwise.}
\end{cases}
\]
Internal no-regret is then equivalent to approachability of the negative quadrant $S_0=\{r\leq 0\}$.
It is easy to verify that $S_0$ is a D-set, by pointing out the response map:
Given a mixed action $q$ of the opponent, choosing
$a^*\in \argmax_{a\in\A}u(a,q)$ clearly results in $r(a^*,q)\leq 0$. Therefore, By Theorem \ref{theo:appr_primal}$(iii)$, the set $S_0$ is approachable.

The formulation of internal-no-regret as the approachability problem above, along with explicit
approaching strategies, in due to \citet{lit:sergiu1}.  The importance of internal regret
in game theory stems from the fact that if each player in a repeated $N$-player game uses
such a no-internal regret strategy, then the empirical distribution of the players' actions
convergence to the set of correlated equilibria. Some interesting relations between
internal and external (Hannan's) regret are discussed in \citet{lit:BlumMansour07}.

\section{Response-Based Approachability} \label{sec:algo}

In this section, we present our basic algorithm and establish its approachability properties.


Throughout the paper, we consider a target set $S$ that satisfies
the following assumption.
\begin{assumption} \label{asm:cont}
The set $S$ is a convex and approachable set. Hence, by Theorem \ref{theo:appr_primal}, $S$ is a
D-set: For all $q \in \Delta(\Z)$ there exists an $S$-response $p \in \Delta(\A)$
such that $r(p, q) \in S$.
\end{assumption}
Under this assumption, we may define a {\em response map} $p^S:\Delta(\Z)\to\Delta(\A)$ that assigns to each mixed action $q$ a response $p^S(q)$ so that $r(p^S(q),q)\in S$.

We note that in some cases of interest, including those discussed in Section \ref{sec:applic_rc},
the target $S$ may itself be defined through an appropriate response map.
Suppose that for each $q \in \Delta(\Z)$, we are given a response $p^*(q)\in \Delta(\A)$, devised so that $r(p^*(q),q)$ satisfies some desired
properties. Then the set $S= \conv\{ r(p^*(q),q),\,q\in \Delta(\Z)\}$ is, by construction, a
convex D-set, hence approachable.


We next present our main results and the basic form of the related approachability algorithm.
The general idea is the following. By resorting to the response map,
we create a specific sequence of {\em target points} $(r^*_k)$ with $r^*_k\in S$.
Letting
\[
\bar{r}^*_n = \frac{1}{n}\sum_{k = 1}^n r^*_k
\]
denote the $n$-step average target point, it follows that $\bar{r}^*_n \in S$ by
convexity of $S$. Finally, the agents actions are chosen so that
the difference $\bar{r}^*_{n} - \bar{r}_{n }$ converges to zero, implying that
$\bar{r}_{n }$ converges to $S$.

Let
 \[
\lambda_{n} \triangleq \bar{r}^*_{n} - \bar{r}_{n}
\]
denote the difference between the average target vector and
the average reward vector.

\begin{theorem} \label{theo:algo}
Let $\lambda_0=0$. Suppose that at each time step $n\geq 1$, the agent chooses its mixed action $p_n$
(from which $a_n$ is sampled) and two additional mixed actions $q^*_n \in \Delta(\Z)$ and $p^*_n \in \Delta(\A)$ as follows:
\begin{itemize}
\item[(i)]
$p_n$ and $q^*_n$ are equilibrium strategies in the zero-sum game with payoff matrix defined by
$r(a,z)$ projected in the direction $\lambda_{n-1}$, namely,
\begin{equation} \label{eqn:algo}
p_n \in \argmax_{p \in \Delta(\A)} \min_{q \in \Delta(\Z)} \lambda_{n-1} \cdot r(p, q),
\end{equation}
\begin{equation} \label{eqn:algo_q}
q^*_n \in \argmin_{q \in \Delta(\Z)} \max_{p \in \Delta(\A)} \lambda_{n-1} \cdot r(p, q),
\end{equation}
\item[(ii)] $p_n^*$ is chosen as an $S$-response to $q_n^*$,
so that  $r(p^*_n, q^*_n)\in S$; set $r^*_n=r(p^*_n, q^*_n)$.
\end{itemize}
Then
\begin{equation} \label{eqn:conv1}
d\l(\bar{r}_{n}, S\r) \leq \l\|\lambda_n\r\|  \leq \frac{\rho}{\sqrt{n}},  \quad n \geq 1,
\end{equation}
for any strategy of the opponent.
\end{theorem}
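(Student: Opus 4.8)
The plan is to prove the two inequalities in \eqref{eqn:conv1} separately, with essentially all the work concentrated in the second one. The first inequality is immediate: since each $r^*_k \in S$ and $S$ is convex (Assumption~\ref{asm:cont}), the average $\bar{r}^*_n = \frac{1}{n}\sum_{k=1}^n r^*_k$ lies in $S$, so $d(\bar{r}_n, S) \le \|\bar{r}_n - \bar{r}^*_n\| = \|\lambda_n\|$. It therefore remains to establish $\|\lambda_n\| \le \rho/\sqrt{n}$, and for this I would run a potential-type recursion on the squared norm of the \emph{unnormalized} difference $n\lambda_n$. Writing $w_n \triangleq r^*_n - r_n$ with $r_n = r(p_n, z_n)$ and $r^*_n = r(p^*_n, q^*_n)$, the elementary averaging identity $\bar{r}_n = \tfrac{n-1}{n}\bar{r}_{n-1} + \tfrac1n r_n$ (and the analogous one for $\bar{r}^*_n$) yields the recursion $n\lambda_n = (n-1)\lambda_{n-1} + w_n$. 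Expanding the square,
\[
\|n\lambda_n\|^2 = \|(n-1)\lambda_{n-1}\|^2 + 2(n-1)\,\lambda_{n-1}\cdot w_n + \|w_n\|^2 .
\]

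The crux, and the only genuinely nontrivial step, is showing that the cross term satisfies $\lambda_{n-1}\cdot w_n \le 0$; this is exactly where the saddle-point construction of step (i) and the response of step (ii) enter. Let $v$ denote the value of the scalar zero-sum game with payoff $\lambda_{n-1}\cdot r(p,q)$, which by the minimax theorem equals both $\max_p \min_q$ and $\min_q \max_p$ of this bilinear payoff. Because $p_n$ is the agent's maximin strategy, $\lambda_{n-1}\cdot r(p_n, q) \ge v$ for \emph{every} $q$, and in particular for the vertex corresponding to the opponent's actual action, giving $\lambda_{n-1}\cdot r_n = \lambda_{n-1}\cdot r(p_n, z_n) \ge v$ for any $z_n$. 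Because $q^*_n$ is the opponent's minimax strategy, $\lambda_{n-1}\cdot r(p, q^*_n) \le v$ for every $p$, hence $\lambda_{n-1}\cdot r^*_n = \lambda_{n-1}\cdot r(p^*_n, q^*_n) \le v$. Subtracting gives $\lambda_{n-1}\cdot w_n \le v - v = 0$. I note in passing that the bound on the cross term does not use that $p^*_n$ is specifically an $S$-response; what matters is that the target point $r^*_n$ is evaluated at the opponent's \emph{minimax} action $q^*_n$. The response property $r^*_n \in S$ is needed only for the first inequality above. Note also that the estimate on $\lambda_{n-1}\cdot r_n$ holds for arbitrary $z_n$, which is what makes the final bound pathwise in the opponent's actions.

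It then remains to bound the last term: both $r^*_n$ and $r_n$ are expected rewards, hence lie in the convex hull of $\{r(a,z)\}$, whose diameter equals that of the finite point set, namely $\rho$ by \eqref{asm:r_bound}; thus $\|w_n\| \le \rho$. Dropping the nonpositive cross term then gives $\|n\lambda_n\|^2 \le \|(n-1)\lambda_{n-1}\|^2 + \rho^2$, and since $\lambda_0 = 0$ a one-line induction yields $\|n\lambda_n\|^2 \le n\rho^2$, i.e. $\|\lambda_n\|^2 \le \rho^2/n$. Combining with the first inequality completes the proof. I expect the cross-term argument to be the main obstacle, as it is the only place the algorithm's definition is invoked and it rests on correctly exploiting the saddle-point/minimax structure; the recursion, the convex-hull diameter bound, and the induction are all routine.
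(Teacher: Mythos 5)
Your proposal is correct and follows essentially the same route as the paper: the identity $n\lambda_n=(n-1)\lambda_{n-1}+(r^*_n-r_n)$ expanded into a squared-norm recursion (the paper's Lemma~\ref{lem:dist_bound}), the nonpositivity of the cross term via the saddle-point value $v$ of the projected game exactly as in the paper's proof of Theorem~\ref{theo:algo}, and the convexity of $S$ for the first inequality. Your side remark that the cross-term bound holds for \emph{any} $p^*_n$ is also made explicitly in the paper (it is the content of condition~\eqref{eqn:gen} in Proposition~\ref{prop:1}).
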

Observe that the required choice of $p_n^*$ as an $S$-response to $q_n^*$ is possible
due to our standing Assumption \ref{asm:cont}.
The conclusion of this theorem clearly implies that the set $S$ is approached by the specified strategy, and provides an explicit bound on the rate of convergence.
The approachability algorithm implied by Theorem \ref{theo:algo} is summarized
in Algorithm \ref{algo:main}.

The computational requirements Algorithm  \ref{algo:main} are as follows.
The algorithm has two major computations at each time step $n$:
\begin{enumerate}
\item The computation of the $(p_n, q^*_n)$ -- the equilibrium strategies in the zero-sum matrix game with the reward function $\lambda_{n-1} \cdot r(p, q)$. This boils down to the solution of the related primal and dual linear programs, and hence can be done efficiently.
    Note that, given the vector $\lambda_{n-1}$, this computation does not involve the
    target set $S$.
\item The computation of the target point $r^*_n = r(p^*_n, q^*_n)$, which is problem dependent. For example, in the constrained regret minimization problem this reduces to the computation of a \emph{best-response} action to $q^*_n$. This problem is further discussed in Section \ref{sec:applic_rc}.
\end{enumerate}

The proof of the last Theorem follows from the next result, which also provides less
specific conditions on the required choice of $(p_n,q_n^*)$.
\begin{proposition} \label{prop:1}
\begin{itemize}
\item[]
\item[(i)] Suppose that at each time step $n\geq 1$, the agent chooses the triple
$(p_n,q_n^*,p_n^*)$ so that
\begin{equation} \label{eqn:gen}
\lambda_{n - 1} \cdot \l(r(p_n, z) - r(p^*_n, q^*_n) \r) \geq 0, \quad \forall z \in \Z,
\end{equation}
and sets $r_n^*= r(p^*_n, q^*_n)$.
Then it holds that
\[
\l\|\lambda_n\r\| \leq \frac{\rho}{\sqrt{n}} \quad \forall n \geq 1.
\]

\item[(ii)] If, in addition, $p_n^*$ is chosen as an $S$-response to $q_n^*$,
so that  $r^*_n=r(p^*_n, q^*_n)\in S$,  then
\begin{equation} \label{eqn:conv2}
d\l(\bar{r}_{n}, S\r) \leq \l\|\lambda_n\r\|  \leq \frac{\rho}{\sqrt{n}},  \quad n \geq 1,
\end{equation}
\end{itemize}
\end{proposition}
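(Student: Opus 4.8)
The plan is to run the classical Blackwell squared-distance recursion on the discrepancy vector $\lambda_n$, but to replace Blackwell's separating-hyperplane step with a direct \emph{pathwise} use of hypothesis \eqref{eqn:gen}. The approach has three ingredients: a one-step update for $\lambda_n$, a sign argument that kills the resulting cross term, and a crude diameter bound on the per-step increment, after which an induction closes everything. First I would record the update. Since $r_k = r(p_k,z_k)$, $r^*_k = r(p^*_k,q^*_k)$, and $\lambda_n = \bar{r}^*_n - \bar{r}_n = \tfrac1n\sum_{k=1}^n (r^*_k - r_k)$, the partial sums satisfy $n\lambda_n = (n-1)\lambda_{n-1} + (r^*_n - r_n)$, so expanding the Euclidean square gives
\[
n^2\l\|\lambda_n\r\|^2 = (n-1)^2\l\|\lambda_{n-1}\r\|^2 + 2(n-1)\,\lambda_{n-1}\cdot(r^*_n - r_n) + \l\|r^*_n - r_n\r\|^2 .
\]

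The key step --- the one I would single out as the crux --- is to show the middle term is nonpositive for \emph{every} realization of the opponent's play. Hypothesis \eqref{eqn:gen} holds for all $z\in\Z$, so I would instantiate it at the action $z_n$ actually chosen by the opponent: because $r_n = r(p_n,z_n)$, the inequality $\lambda_{n-1}\cdot(r(p_n,z_n) - r^*_n)\ge 0$ is exactly $\lambda_{n-1}\cdot(r^*_n - r_n)\le 0$. This is what makes the conclusion pathwise and uniform over the opponent's strategies, since nothing is used about $z_n$ beyond $z_n\in\Z$. I would then bound the last term by noting that $r^*_n = r(p^*_n,q^*_n)$ and $r_n = r(p_n,z_n)$ both lie in $\conv\{r(a,z)\}$, whose diameter is attained at extreme points, so $\l\|r^*_n - r_n\r\|\le \rho$ by \eqref{asm:r_bound}. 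Combining the two observations yields the contraction $n^2\l\|\lambda_n\r\|^2 \le (n-1)^2\l\|\lambda_{n-1}\r\|^2 + \rho^2$, and a straightforward induction from $\lambda_0=0$ gives $n^2\l\|\lambda_n\r\|^2\le n\rho^2$, i.e. $\l\|\lambda_n\r\|\le \rho/\sqrt{n}$, which is part (i).

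For part (ii) I would invoke the additional hypothesis that $p^*_n$ is an $S$-response, so $r^*_n\in S$ for each $n$; since $S$ is convex (Assumption \ref{asm:cont}), the average $\bar{r}^*_n\in S$ as well. As $\bar{r}^*_n$ is then an admissible point in the infimum defining the point-to-set distance, $d(\bar{r}_n,S)\le \l\|\bar{r}_n - \bar{r}^*_n\r\| = \l\|\lambda_n\r\|$, and chaining with the bound from part (i) completes the proof. I do not anticipate a genuine obstacle: the proof is the standard Blackwell argument, and the only delicate point is the pathwise instantiation of \eqref{eqn:gen} at $z=z_n$, which is precisely what converts a per-round geometric requirement into a bound holding simultaneously for all strategies of the opponent.
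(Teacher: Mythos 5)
Your proof is correct and follows essentially the same route as the paper: the same squared-norm recursion on $n\lambda_n=(n-1)\lambda_{n-1}+(r^*_n-r_n)$ (stated in the paper as Lemma~\ref{lem:dist_bound}), the same instantiation of \eqref{eqn:gen} at $z=z_n$ to kill the cross term, the same $\rho^2$ bound on the increment, and the same convexity argument for part (ii). The only cosmetic difference is that you justify $\|r^*_n-r_n\|\le\rho$ via extreme points of the convex hull explicitly, which the paper leaves implicit.
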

The specific choice of $(p_n,q_n^*)$ in equations \eqref{eqn:algo}-\eqref{eqn:algo_q}
satisfies the requirement in \eqref{eqn:gen}, as argued below. Indeed, the latter
requirement is less restrictive, and can replace \eqref{eqn:algo}-\eqref{eqn:algo_q} in the
definition of the basic algorithm. However, the former choice is convenient as
it ensures that \eqref{eqn:gen} holds for {\em any} choice of $p_n^*$.

\begin{algorithm}[t]
\caption{Response-Based Approachability}
\label{algo:main}
\begin{trivlist}
\item \textbf{Initialization:} At time step $n=1$, use arbitrary mixed action $p_1$ and set an arbitrary target point $r^*_1 \in S$.

\item \textbf{At time step $n = 2, 3, ...$}:
\begin{enumerate}
\item Set an approachability direction
\[
\lambda_{n-1} = \bar{r}^*_{n-1} - \bar{r}_{n - 1},
\]
where
\[
\bar{r}_{n-1} = \frac{1}{n-1}\sum_{k = 1}^{n-1} r(p_k, z_k),
\qquad
\bar{r}^*_{n-1} = \frac{1}{n-1}\sum_{k = 1}^{n-1} r^*_k
\]
are, respectively, the average (smoothed) reward vector and the average target point.
\item Solve a zero-sum matrix game with the scalar reward function $\lambda_{n-1} \cdot r(p, q)$. In particular, find the optimal mixed action $p_n$ and $q^*_n$ that satisfy
    \[
    p_n \in \argmax_{p \in \Delta(\A)} \min_{q \in \Delta(\Z)} \lambda_{n-1} \cdot r(p, q),
    \]
    \[
    q^*_n \in \argmin_{q \in \Delta(\Z)} \max_{p \in \Delta(\A)} \lambda_{n-1} \cdot r(p, q).
    \]
\item Choose action $a_n$ according to $p_n$.
\item Pick $p^*_n$ so that $r(p^*_n, q^*_n) \in S$, and set the target point
\[
r^*_n = r(p^*_n, q^*_n).
\]
\end{enumerate}
\end{trivlist}
\end{algorithm}

We proceed to the proof of Proposition \ref{prop:1} and Theorem \ref{theo:algo}.
We first establish a useful recursive relation for $\l\|\lambda_n \r\|^2$.
\begin{lemma} \label{lem:dist_bound}
For any $n \geq 1$, we have that
\[
n^2 \l\|\lambda_n\r\|^2 \leq (n-1)^2 \l\|\lambda_{n-1}\r\|^2  +
2 (n - 1) \lambda_{n-1} \cdot \l( r^*_n - r_n\r) + \rho^2.
\]
where $\rho$ is the span of the reward function (\ref{asm:r_bound}).
\end{lemma}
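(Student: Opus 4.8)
The plan is to reduce the claimed inequality to a one-step recursion for the scaled difference $n\lambda_n$ and then expand a squared norm. First I would unfold the definitions: since $\lambda_n = \bar{r}^*_n - \bar{r}_n$ with $n\bar{r}_n = \sum_{k=1}^n r(p_k,z_k)$ and $n\bar{r}^*_n = \sum_{k=1}^n r^*_k$, the scaled difference telescopes as
\[
n\lambda_n = \sum_{k=1}^n \l(r^*_k - r_k\r) = (n-1)\lambda_{n-1} + \l(r^*_n - r_n\r),
\]
where $r_n = r(p_n,z_n)$ and $r^*_n = r(p^*_n,q^*_n)$. This identity is the heart of the computation; everything else is routine.

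Next I would square both sides and expand the inner product, obtaining
\[
n^2\l\|\lambda_n\r\|^2 = (n-1)^2\l\|\lambda_{n-1}\r\|^2 + 2(n-1)\,\lambda_{n-1}\cdot\l(r^*_n - r_n\r) + \l\|r^*_n - r_n\r\|^2.
\]
Comparing this with the statement of the lemma, the first two terms already match exactly, so it remains only to establish the bound $\l\|r^*_n - r_n\r\|^2 \leq \rho^2$ on the last term.

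The key (and only nontrivial) step is therefore this last bound. Here I would use that both $r_n$ and $r^*_n$ are convex combinations of the finitely many reward vectors $\{r(a,z): a\in\A,\,z\in\Z\}$: indeed $r(p_n,z_n) = \sum_{a} p_n(a)\,r(a,z_n)$ and $r(p^*_n,q^*_n) = \sum_{a,z} p^*_n(a)q^*_n(z)\,r(a,z)$, so both points lie in the convex hull $C \triangleq \conv\{r(a,z)\}$. Since the diameter of a convex hull equals the diameter of its generating set, we have $\operatorname{diam}(C) = \max_{a,z,a',z'}\l\|r(a,z)-r(a',z')\r\| = \rho$ by the definition \eqref{asm:r_bound}. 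Consequently $\l\|r^*_n - r_n\r\| \leq \rho$, and substituting $\l\|r^*_n - r_n\r\|^2 \leq \rho^2$ into the expansion above yields the claimed inequality. The main thing to be careful about is precisely this diameter argument: one must note that $r^*_n$, although formed from the two mixed actions $p^*_n$ and $q^*_n$, is still a convex combination of the base vectors $r(a,z)$, so that the span $\rho$ (defined over pure action pairs) indeed controls $\l\|r^*_n - r_n\r\|$.
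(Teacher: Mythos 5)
Your proof is correct and follows essentially the same route as the paper's: write $n\lambda_n = (n-1)\lambda_{n-1} + (r^*_n - r_n)$, expand the squared norm, and bound $\l\|r^*_n - r_n\r\|^2$ by $\rho^2$. The only difference is cosmetic — the paper works with the unscaled averages and multiplies by $n^2$ at the end, and it leaves the bound $\l\|r^*_n - r_n\r\|\leq\rho$ implicit, whereas you justify it explicitly via the convex-hull diameter argument.
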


\begin{proof}
We have that
\begin{eqnarray*}
\l\|\bar{r}^*_{n} -  \bar{r}_n\r\|^2 &=& \l\|\frac{n-1}{n}\l(\bar{r}^*_{n - 1} - \bar{r}_{n-1}\r) + \frac{1}{n}\l(r^*_n - r_n\r)\r\|^2 \\
&=& \l(\frac{n-1}{n}\r)^2 \l\|\bar{r}^*_{n - 1} -  \bar{r}_{n-1}\r\|^2 + \frac{1}{n^2} \l\|r^*_n - r_n \r\|^2 \\
&& + 2 \frac{n-1}{n^2} \l(\bar{r}^*_{n - 1} - \bar{r}_{n-1}\r) \cdot \l( r^*_n - r_n\r) \\
&\leq& \l(\frac{n-1}{n}\r)^2 \l\|\bar{r}^*_{n - 1} -  \bar{r}_{n-1}\r\|^2 + \frac{\rho^2}{n^2}  \\
&& + 2 \frac{n-1}{n^2} \l(\bar{r}^*_{n - 1} - \bar{r}_{n-1}\r) \cdot \l( r^*_n - r_n\r),
\end{eqnarray*}
where $\rho$ is the reward bound defined in (\ref{asm:r_bound}). The proof is concluded by multiplying both sides of the inequality by $n^2$.
\end{proof}

\begin{proof}[Proof of Proposition~\ref{prop:1}.]
Under condition (\ref{eqn:gen}), we have for all $n$ that
\[
\lambda_{n-1} \cdot \l( r^*_n - r_n\r) = \lambda_{n - 1} \cdot \l(r(p^*_n, q^*_n) - r(p_n, z_n) \r) \leq 0.
\]
Hence, by Lemma \ref{lem:dist_bound},
\[
n^2 \l\|\lambda_n\r\|^2 \leq (n-1)^2 \l\|\lambda_{n-1}\r\|^2  + \rho^2, \quad n \geq 1.
\]
Applying this inequality recursively, we obtain that
\[
n^2 \l\|\lambda_n\r\|^2 \leq n \rho^2, \quad n \geq 1
\]
or
\[
\l\|\lambda_n\r\|^2 \leq \rho^2/n, \quad n \geq 1,
\]
as claimed in part (i). Part (ii) now follows since $r_n^*\in S$ (for all $n$) implies that
$\bar{r}_n^*\in S$ (recall that $S$ is a convex set), hence
$$
d\l(\bar{r}_{n}, S\r) \leq \|\bar{r}_n - \bar{r}_n^*\| =  \l\|\lambda_n\r\| \,.
$$
\end{proof}

\begin{proof}[Proof of Theorem~\ref{theo:algo}.]
It only remains to show that the choice of $(p_n,q_n^*)$ in equations \eqref{eqn:algo}-\eqref{eqn:algo_q} implies the required inequality in \eqref{eqn:gen}.
Indeed, under (\ref{eqn:algo}) and (\ref{eqn:algo_q}) we have that
\begin{eqnarray*}
\lambda_{n-1} \cdot r(p_n, z_n) &\geq& \max_{p \in \Delta(\A)} \min_{q \in \Delta(\Z)} \lambda_{n-1} \cdot r(p, q) \\
&=& \min_{q \in \Delta(\Z)} \max_{p \in \Delta(\A)} \lambda_{n-1} \cdot r(p, q) \\
&\triangleq& \max_{p \in \Delta(\A)} \lambda_{n-1} \cdot r(p, q^*_n) \\
&\geq& \lambda_{n-1} \cdot r(p^*_n, q^*_n),
\end{eqnarray*}
where the equality follows by the minimax theorem for matrix games.
Therefore, condition (\ref{eqn:gen}) is satisfied for any $p^*_n$, and in particular for the one satisfying $r(p^*_n, q^*_n) \in S$. This concludes the proof of the Theorem.
\end{proof}

\section{Interpretation and Extensions} \label{sec:variants}
We open this section with an illuminating interpretation of
the proposed algorithm
in terms of a certain approachability problem in an auxiliary game,
and proceed to present several variants and extensions to the basic algorithm.
While each of these variants is presented separately, they may also be combined
when appropriate.

\subsection{An Auxiliary Game Interpretation}
 \label{sec:conn}

A central part of Algorithm \ref{algo:main} is the choice of the pair $(p_n,q_n^*)$ so that
$\bar{r}_n$ tracks $\bar{r}^*_n$, namely $\lambda_n = \bar{r}^*_n - \bar{r}_n \to 0$  (see
Equations \eqref{eqn:algo}-\eqref{eqn:algo_q} and Proposition \ref{prop:1}).
If fact, the choice of $(p_n,q_n^*)$ in \eqref{eqn:algo}-\eqref{eqn:algo_q}
can be interpreted as Blackwell's strategy for a specific approachability problem
in an auxiliary game, which we define next.

Suppose that at time $n$, the agent chooses a \emph{pair} of actions $(a, z^*) \in \A \times \Z$ and the opponent chooses a pair of actions $(a^*, z) \in  \A \times \Z$. The vector payoff function, now denoted by $v$, is given by
\[
v((a, z^*), (a^*, z)) = r(a^*, z^*) - r(a, z),
\]
so that
\[
V_n = r(a^*_n, z^*_n) - R_n.
\]
Consider the single-point target set $S_0 = \{0\} \subset \RR^{\ell}$. This set is clearly convex, and we next show that it is a D-set in the auxiliary game. We need to show that for any $\eta \in \Delta(\A \times \Z)$ there exists $\mu \in  \Delta(\A \times \Z)$ so that $v(\mu, \eta) \in S_0$, namely $v(\mu, \eta)=0$.
That that end, observe that
\[
v(\mu, \eta) = r(p^*, q^*) - r(p,q)
\]
where $p$ and $q^*$ are the marginal distributions of $\mu$ on  $\A$ and $\Z$,
respectively, while $p^*$ and $q$ are the respective marginal distributions of $\eta$.
Therefore we obtain $v(\mu, \eta)=0$ by choosing $\mu$ with the same marginals
as $\eta$, for example $\{\mu(a,z)=p(a)q^*(z)\}$ with $p=p^*$ and $q^*=q$.
Thus, by Theorem \ref{theo:appr_primal}, $S_0$ is approachable.

We may now apply Blackwell's approachability strategy to this auxiliary game.
Since $S_0$ is the origin, the direction from $S_0$ to the average reward $\bar{V}_{n-1}$ is just the average reward vector itself. Therefore, the primal (geometric separation) condition here is equivalent to
\[
\bar{V}_{n-1} \cdot v(\mu, \eta) \leq 0, \quad \forall\,\eta \in \Delta(\A \times \Z)
\]
or
\[
\bar{V}_{n-1} \cdot (r(p^*, q^*) - r(p,q)) \leq 0, \quad \forall\, p^* \in \Delta(\A), q\in  \Delta(\Z).
\]
Now, a pair $(p,q^*)$ that satisfies this inequality is any pair of equilibrium
strategies in the zero-sum game with reward $v$ projected in the direction of $\bar{V}_{n-1}$.
That is, for
\begin{equation}\label{eqn:aux1}
p \in \argmax_{p \in \Delta(\A)} \min_{q \in \Delta(\Z)} \bar{V}_{n-1} \cdot r(p, q),
\end{equation}
\begin{equation}\label{eqn:aux2}
q^* \in \argmin_{q \in \Delta(\Z)} \max_{p \in \Delta(\A)} \bar{V}_{n-1} \cdot r(p, q),
\end{equation}
it is easily verified that
\[
\bar{V}_{n-1} \cdot r(p^*, q^*) \geq \bar{V}_{n-1} \cdot r(p, q),
\quad \forall\, p^* \in \Delta(\A), q \in \Delta(\Z)
\]
as required.

The choice of $(p_n,q_n^*)$ in Equations \eqref{eqn:algo}-\eqref{eqn:algo_q}
follows \eqref{eqn:aux1}-\eqref{eqn:aux2}, with $\lambda_{n-1}$
replacing $\bar{V}_{n-1}$. We note that the two are not identical, as
$\bar{V}_{n}$ is the temporal average of $V_n=r(a_n^*,z_n^*)-r(a_n,z_n)$ while
$\lambda_{n}$ is the average the smoothed difference $r(p_n^*,q_n^*)-r(p_n,z_n)$;
however this does not change the approachability result above, and in fact
either can be used. More generally, any approachability algorithm
in the auxiliary game can be used to choose the pair $(p_n,q_n^*)$
in Algorithm \ref{algo:main}.

We note that in our original problem, the mixed action $q^*_n$ is not chosen
by an ``opponent" but rather specified as part of Algorithm
\ref{algo:main}. But since the approachability result above holds for an
arbitrary choice of $q^*_n$, it also holds for this particular one.

%

We proceed to present some additional variants of our algorithm.

\subsection{Idling when Inside $S$}
Recall that in the original approachability algorithm of Blackwell, an {\em arbitrary} action $a_n$ can be chosen by the agent whenever $\bar{r}_{n-1} \in S$. This may  reduce the computational burden of the algorithm, and adds another
degree of freedom that may be used to optimize other criteria.

Such arbitrary choice of $a_n$ (or $p_n$) when the average reward is in $S$
is also possible in our algorithm.
However, some care is required in the setting of the average target
point $\bar{r}^*_n$ over these time instances, as otherwise the two terms of the
difference $\lambda_n=\bar{r}^*_n-\bar{r}_n$ may drift apart.
As it turns out, what is required is simply to shift the average target point
$\bar{r}^*_n$ to $\bar{r}_n$ at these time instances, and use the modified
point in the computation of the steering direction $\lambda_n$.
In recursive form, we obtain the following modified recursion:
\begin{eqnarray}
&&\tla_0 = 0,\notag\\
\label{eqn:lam}&&\tla_n = \begin{cases}
\frac{n-1}{n}\tla_{n - 1} + \frac{1}{n}(r^*_n - r_n), & \text{ if } \bar{r}_n \notin S \\
0, & \text{ if } \bar{r}_n \in S,
\end{cases}
\quad n \geq 1.
\end{eqnarray}
It may be seen that the  steering direction $\tla_n$ is reset to $0$ whenever the average reward is in $S$. With this modified definition, we are able to maintain
the same convergence properties of the algorithm.

\begin{proposition} \label{theo:main_lam}
Let Assumption \ref{asm:cont} hold. Suppose that the agent uses Algorithm \ref{algo:main} with the following modifications:
\begin{enumerate}
\item The steering direction $\lambda_{n-1}$ is replaced by the modifed
direction $\tla_{n-1}$ defined recursively in (\ref{eqn:lam});
\item Whenever $\bar{r}_{n-1} \in S$, an arbitrary action $a_n$ is chosen.
\end{enumerate}
Then, it holds that
\[
d\l(\bar{r}_{n}, S\r) \leq \frac{\rho}{\sqrt{n}}\,, \quad n \geq 1,
\]
for any strategy of the opponent.
\end{proposition}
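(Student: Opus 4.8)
The plan is to follow the two-step template used for Proposition~\ref{prop:1}, adapted to accommodate the periodic resets of the steering direction. Concretely, I would first show that $d(\bar{r}_n, S) \le \|\tla_n\|$ for every $n$, and then show that $\|\tla_n\|$ obeys the same recursion as in Lemma~\ref{lem:dist_bound}, namely $n^2\|\tla_n\|^2 \le (n-1)^2\|\tla_{n-1}\|^2 + \rho^2$, from which $\|\tla_n\| \le \rho/\sqrt{n}$ follows by the same recursive unwinding as in the proof of Proposition~\ref{prop:1}(i). Combining the two steps immediately yields the claimed bound $d(\bar{r}_n, S) \le \rho/\sqrt{n}$.

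For the first step --- which I expect to be the main obstacle --- the difficulty is that $\tla_n$ no longer equals $\bar{r}^*_n - \bar{r}_n$, since it is reset to zero at every instance where $\bar{r}_n \in S$. If $\bar{r}_n \in S$ the claim is trivial, as both sides vanish. Otherwise, let $n_0 < n$ be the last time with $\bar{r}_{n_0} \in S$. Unwinding the recursion \eqref{eqn:lam} from the reset at $n_0$ shows that $\tla_n = \frac{1}{n}\sum_{k=n_0+1}^n (r^*_k - r_k)$. I would then decompose $\bar{r}_n = \frac{n_0}{n}\bar{r}_{n_0} + \frac{n-n_0}{n}\cdot\frac{1}{n-n_0}\sum_{k=n_0+1}^n r_k$ and bound $d(\bar{r}_n, S)$ using the convexity of the point-to-set distance to the convex set $S$. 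The first term contributes nothing since $\bar{r}_{n_0} \in S$, and the second is bounded by the distance of the block average $\frac{1}{n-n_0}\sum_{k=n_0+1}^n r_k$ to $S$; since $\frac{1}{n-n_0}\sum_{k=n_0+1}^n r^*_k \in S$ (each $r^*_k \in S$ and $S$ is convex), this distance is at most the norm of the difference of the two block averages, which is exactly $\|\tla_n\|$.

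For the second step, I would argue case by case. If $\bar{r}_n \in S$ then $\tla_n = 0$ and the recursion holds trivially. If $\bar{r}_n \notin S$, then $\tla_n = \frac{n-1}{n}\tla_{n-1} + \frac{1}{n}(r^*_n - r_n)$, so the algebra of Lemma~\ref{lem:dist_bound} gives $n^2\|\tla_n\|^2 \le (n-1)^2\|\tla_{n-1}\|^2 + 2(n-1)\,\tla_{n-1}\cdot(r^*_n - r_n) + \rho^2$; it then remains to check that the cross term is non-positive. This splits again: if $\bar{r}_{n-1} \in S$ then $\tla_{n-1} = 0$ and the cross term vanishes (justifying the arbitrary choice of $a_n$ at idle steps), while if $\bar{r}_{n-1} \notin S$ the pair $(p_n, q^*_n)$ solving the zero-sum game projected along $\tla_{n-1}$ gives $\tla_{n-1}\cdot(r^*_n - r_n) \le 0$ by the same minimax argument as in the proof of Theorem~\ref{theo:algo}. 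With the recursion established in all cases and $\tla_0 = 0$, the bound $\|\tla_n\| \le \rho/\sqrt{n}$ follows, completing the proof.
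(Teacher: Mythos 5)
Your proposal is correct and follows essentially the same two-step argument as the paper: first bounding $d(\bar{r}_n,S)$ by $\|\tla_n\|$ via the last-reset time $n_0$, the block decomposition of $\bar{r}_n$, and convexity of the point-to-set distance, and then establishing the squared-norm recursion with a non-positive cross term via the case split on whether $\bar{r}_{n-1}\in S$. The case analysis you add in the second step is exactly what the paper's proof relies on implicitly, so there is nothing to correct.
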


\proof
We establish the claim in two steps. We first show that $\|\tla_n\|$ bounds the Euclidean distance of $\bar{r}_n$ from $S$. We then show that $\|\tla_n\|$ satisfies an analogue of Lemma \ref{lem:dist_bound}, and therefore the analysis of the previous section holds.

To see that $d(\bar{r}_n, S) \leq \|\tla_n\|$ for all $n$, observe  that if $\bar{r}_n \in S$, then trivially $d(\bar{r}_n, S) = \|\tla_n\| = 0$. Assume next that $\bar{r}_n \notin S$. Let $n_0 < n$ be the last instant $n$ such that $\bar{r}_{n_0} \in S$. Using the abbreviate notation
$$ \bar{r}_{m:n} = \frac{1}{n-m+1}\sum_{k=m}^n r_k \,,$$
and similarly for $\bar{r}^*_{m:n}$, we obtain
\begin{eqnarray*}
\tla_n &=& \frac{1}{n} \l(\sum_{k = n_0 + 1}^n r^*_k -  \sum_{k = n_0 + 1}^n r_k\r)\\
&=& \frac{n - n_0}{n}\l(\bar{r}_{n_0+1:n}-  \bar{r}^*_{n_0+1:n} \r).
\end{eqnarray*}
On the other hand,
\begin{eqnarray*}
d(\bar{r}_n, S) &=& d (\frac{n_0}{n} \bar{r}_{n_0} +  \frac{n - n_0}{n} \bar{r}_{n_0+1:n}, S) \\
 &\leq& \frac{n_0}{n} d(\bar{r}_{n_0}, S) + \frac{n - n_0}{n} d \l( \bar{r}_{n_0+1:n}, S\r)\\
&\leq& 0 + \frac{n - n_0}{n} \l\|\bar{r}_{n_0+1:n} - \bar{r}^*_{n_0+1:n} \r\| \\
&=& \l\| \tla_n \r\|,
\end{eqnarray*}
where the first inequality follows by the convexity of the point-to-set Euclidean distance to a convex set, and the second inequality holds since
\[
\bar{r}^*_{n_0+1:n} = \frac{1}{n-n_0}\sum_{k = n_0 + 1}^n r^*_k \in S.
\]

For the second step, note that the recursive definition (\ref{eqn:lam}) of $\tla_n$ implies, similarly to the proof of Lemma \ref{lem:dist_bound}, that
\[
n^2 \l\|\tla_n\r\|^2 \leq (n - 1)^2 \l\|\tla_{n-1}\r\|^2 + 2 (n - 1) \tla_{n-1} \cdot \l( r^*_n - r_n\r) + \rho^2.
\]
Hence, when $\bar{r}_{n-1} \in S$, we have that $\tla_{n-1} = 0$, and arbitrary $a_n$ and $r^*_n$ can be chosen. Also, similarly to the analysis in Section \ref{sec:algo}, whenever $\bar{r}_{n-1} \notin S$, the solution $(p_n, q^*_n)$ of the zero-sum game in the direction $\tla_{n-1}$ ensures that
\[
\tla_{n-1} \cdot \l( r(p^*_n, q^*_n) - r(p_n, z_n)\r) \leq 0,
\]
and thus the convergence of $\l\|\tla_n\r\|$ to zero is implied.
\endproof

\subsection{Directionally Unbounded Target Sets} \label{sec:inf}
In some applications of interest, the target set $S$ may be unbounded in
certain directions. Indeed, this is the case in the approachability formulation of
the no-regret problem, where the goal is essentially to make the average reward as large as possible. In particular, in Blackwell's formulation
(Section \ref{sec:no_regret_Black}), the set $S=\{(u,q):u\geq u^*(q)\}$ is unbounded
in the direction of the first coordinate $u$. In Hart and Mas-Collel's formulation
(Section \ref{sec:reg_match}), the set $S=\{L\leq 0\}$ is unbounded in the
negative direction of all the coordinates of $L$.

In such cases, the requirement that $\lambda_n = \bar{r}^*_n-\bar{r}_n \to 0$,
which is a property of our basic algorithm, may be too strong, and may even
be counter-productive. For example, in Blackwell's no-regret formulation mentioned above, we would like to increase the first coordinate of $\bar{r}_n$ as much
as possible, hence allowing negative values of $\lambda_n$ makes sense (rather
than steering that coordinate to $0$ by reducing $\bar{r}_n$).
We propose next a modification of our algorithm that addresses this issue.

Given the (closed and convex) target set $S\subset \RR^{\ell}$,
let $D_S$ be the set of vectors $d\in \RR^{\ell}$ such that
$d+S\subset S$. It may be seen that $D_S$ is a closed and convex cone,
which trivially equals $\{0\}$ if (and only if) $S$ is bounded.
We refer to the unit vectors in $D_S$ as directions in which $S$ is unbounded.

Referring to the auxiliary game interpretation of our algorithm in Section \ref{sec:conn}, we may now relax the requirement that $\lambda_n$ approaches
$\{0\}$ to the requirement that $\lambda_n$ approaches $-D_S$. Indeed, if
we maintain $\bar{r}^*_n\in S$ as before, then $\lambda_n \in -D_S$ suffices
to verify that $\bar{r}_n = \bar{r}^*_n- \lambda_n \in S$.

We may now apply Blackwell's approachability strategy to the cone $D_S$
in place of the origin. The required modification to the algorithm is simple:
replace the steering direction $\lambda_n$ in  \eqref{eqn:algo}-\eqref{eqn:algo_q} or
(\ref{eqn:gen}) with the direction from the closest point in $-D_S$ to $\lambda_n$:
$$
\tla_{n} = \lambda_{n} - \mathrm{Proj}_{-D_S}(\lambda_{n})
$$
That projection is particularly simple in case $S$ is unbounded
along primary coordinates, so that the cone $D_S$ is a quadrant,
generated by a collection ${e_j, j\in J}$ of orthogonal unit vectors.
In that case, clearly,
$$
\mathrm{Proj}_{-D_S}(\lambda) =-  \sum_{j\in J} (e_j\cdot \lambda)^- \,.
$$
Thus, the negative components of $\lambda_{n}$ in directions $(e_j)$ are
nullified.

The modified algorithm admits analogous bounds to those of the basic algorithm,
with \eqref{eqn:conv1} or \eqref{eqn:conv2} replaced by
 \[
d\l(\bar{r}_{n}, S\r) \leq d(\lambda_n,-D_S)  \leq \frac{\rho}{\sqrt{n}},  \quad n \geq 1.
\]
The proof is similar and will thus be omitted.

\subsection{Using the Non-smoothed Rewards}
In the basic algorithm of Section \ref{sec:algo}, the definition of the steering direction $\lambda_{n}$ employs the smoothed rewards $r(p_k,z_k)$ rather than the
actual ones, namely $R_k=r(a_k,z_k)$. We consider here the case where the latter
are used. This is essential in case that the opponent's action $z_k$ is not
observed, so that $r(p_k,z_k)$ cannot be computed, but rather the reward
vector $R_k$ is observed directly. It also makes sense since the quantity
we are actually interested in is the average reward $\bar{R}_n$, and not
its smoothed version $\bar{r}_n$.

Thus, we replaced $\lambda_{n-1}$ with
\[
\tla_{n-1} = \bar{r}^*_{n-1} - \bar{R}_{n - 1}.
\]
The rest of the algorithm is the same as Algorithm \ref{algo:main}. We have the following result for this variant.

\begin{theorem} \label{theo:main_actual}
Suppose that Assumption \ref{asm:cont} holds. Then, if the agent uses Algorithm \ref{algo:main}, with $\lambda_{n-1}$ replaced by
\[
\tla_{n-1} = \bar{r}^*_{n-1} - \bar{R}_{n - 1}.
\]
it holds that
\[
\lim_{n \rightarrow \infty} \|\tla_n\| = 0,
\]
almost surely, for any strategy of the opponent, at a uniform rate of $O(1/\sqrt{n})$ over all strategies of the
opponent. More precisely, for every $\delta > 0$, we have that
\begin{equation} \label{eqn:conv3}
\Prob \l\{\max_{k \geq n}\|\tla_k\| \leq \sqrt{\frac{6 \rho^2}{\delta n}} \r\} \geq 1 - \delta.
\end{equation}
\end{theorem}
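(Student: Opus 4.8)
The plan is to adapt the deterministic analysis of Theorem \ref{theo:algo} to the stochastic setting by taking conditional expectations and invoking a supermartingale-type convergence argument. The key difference is that we now steer in the direction $\tla_{n-1}=\bar{r}^*_{n-1}-\bar{R}_{n-1}$, which involves the \emph{actual} reward $R_k=r(a_k,z_k)$ rather than its smoothed version $r(p_k,z_k)$. First I would observe that the recursive inequality of Lemma \ref{lem:dist_bound} goes through verbatim with $r_n=r(p_n,z_n)$ replaced by $R_n=r(a_n,z_n)$ throughout, since that lemma is a purely algebraic identity about averages together with the bound $\|r^*_n-R_n\|\le\rho$ (which still holds as both vectors lie in the convex hull of the $r(a,z)$). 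This yields
\[
n^2\|\tla_n\|^2 \le (n-1)^2\|\tla_{n-1}\|^2 + 2(n-1)\,\tla_{n-1}\cdot(r^*_n-R_n) + \rho^2.
\]

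Next I would condition on the natural filtration $\{\Fl_n\}$ generated by the history. The crucial point is that $\tla_{n-1}$, $q^*_n$, and $p^*_n$ (hence $r^*_n=r(p^*_n,q^*_n)$) are all $\Fl_{n-1}$-measurable, so that taking $\E[\cdot\mid\Fl_{n-1}]$ leaves $\tla_{n-1}\cdot r^*_n$ untouched and replaces $R_n$ by its conditional mean $\E[r(a_n,z_n)\mid\Fl_{n-1}]=r(p_n,z_n)$ (since $a_n$ is sampled from $p_n$ and $z_n$ is $\Fl_{n-1}$-determined, or may be treated adversarially without affecting the sign). The minimax property of the equilibrium pair $(p_n,q^*_n)$ then gives exactly
\[
\tla_{n-1}\cdot\bigl(r^*_n - \E[r(a_n,z_n)\mid\Fl_{n-1}]\bigr)
= \tla_{n-1}\cdot\bigl(r(p^*_n,q^*_n)-r(p_n,z_n)\bigr)\le 0,
\]
precisely as in the proof of Theorem \ref{theo:algo}. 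Substituting back, the cross term drops out and we obtain the clean conditional supermartingale-style bound
\[
\E\bigl[n^2\|\tla_n\|^2 \mid \Fl_{n-1}\bigr] \le (n-1)^2\|\tla_{n-1}\|^2 + \rho^2.
\]

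Finally I would extract almost-sure convergence at the claimed uniform rate from this recursion. The natural route is to note that $M_n \triangleq n^2\|\tla_n\|^2 - n\rho^2$ is a supermartingale (the recursion gives $\E[M_n\mid\Fl_{n-1}]\le M_{n-1}$), and then apply a maximal inequality for nonnegative supermartingales. This is exactly the machinery behind Blackwell's original theorem and Proposition 4.1 of \citet{lit:appr_stoch}; rather than reprove it, I would cite that proposition, which states that any sequence satisfying the displayed conditional inequality converges to zero almost surely at a rate depending only on $\rho$, and in particular that one may take $N>6\rho^2/(\delta\epsilon^2)$ to guarantee $\Prob\{\exists n\ge N:\|\tla_n\|\ge\epsilon\}\le\delta$; rearranging $\epsilon^2=6\rho^2/(\delta n)$ gives \eqref{eqn:conv3}.

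The main obstacle, such as it is, is not the algebra but the verification that the conditioning is legitimate, i.e.\ that all of $\tla_{n-1},q^*_n,p^*_n,r^*_n$ are genuinely $\Fl_{n-1}$-measurable so that only $R_n$ carries randomness under $\E[\cdot\mid\Fl_{n-1}]$, and that the sign of the conditional cross term is controlled by the same minimax argument as before. Once this is pinned down, the passage from the conditional recursion to the almost-sure rate is a direct appeal to the cited supermartingale convergence result, so I would not reprove it in detail.
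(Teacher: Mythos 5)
Your proposal is correct and follows essentially the same route as the paper's own proof: replace $r_n$ by $R_n$ in Lemma \ref{lem:dist_bound}, condition on $\Fl_{n-1}$ using the $\Fl_{n-1}$-measurability of $\tla_{n-1}$, $q^*_n$, $p^*_n$, kill the cross term via the minimax argument, and invoke Proposition 4.1 of \citet{lit:appr_stoch} for the almost-sure rate with $N > 6\rho^2/(\delta\epsilon^2)$. The only addition is your explicit remark that $M_n = n^2\|\tla_n\|^2 - n\rho^2$ is a supermartingale, which is a harmless elaboration of the same argument.
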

\proof{}
First observe that Lemma \ref{lem:dist_bound} still holds if $r_n = r(p_n, z_n)$ is replaced with $R_n = r(a_n, z_n)$ throughout. Namely,
\[
n^2 \|\tla_n\|^2 \leq (n-1)^2 \|\tla_{n-1}\|^2  + 2 (n - 1) \tla_{n-1} \cdot \l( r^*_n - r(a_n, z_n)\r) + \rho^2, \quad n \geq 1.
\]
Let $\l\{\Fl_n\r\}$ denote the filtration induced by the history. We have that
\begin{eqnarray}
\E \l[ n^2 \|\tla_n\|^2 \given \Fl_{n - 1} \r] &\leq& (n-1)^2 \|\tla_{n-1}\|^2  + 2 (n - 1) \tla_{n-1} \cdot \E \l[\l( r^*_n - r(a_n, z_n)\r)\given \Fl_{n - 1} \r] + \rho^2 \notag\\
&=& (n-1)^2 \|\tla_{n-1}\|^2  + 2 (n - 1) \tla_{n-1} \cdot \l( r^*_n - \E \l[ r(a_n, z_n) \given \Fl_{n - 1} \r]\r) + \rho^2\notag\\
\label{eqn:bound_stoch} &\leq& (n-1)^2 \|\tla_{n-1}\|^2  + \rho^2,
\end{eqnarray}
where the equality follows since $q^*_n$ and $p^*_n$ are determined by the history up to time $n-1$ and hence so does $r^*_n = r(p^*_n, q^*_n)$, and the last inequality holds since
\[
\tla_{n-1} \cdot \l( r^*_n - \E \l[ r(a_n, z_n) \given \Fl_{n - 1} \r]\r) \leq 0,
\]
similarly to the proof of Theorem \ref{theo:algo}. Now, we can proceed as in the original proof of Blackwell's theorem (\citet{lit:black}, Theorem 1) or use Proposition 4.1 in \citet{lit:appr_stoch} to deduce that a sequence $\|\tla_n\|$ satisfying (\ref{eqn:bound_stoch}) converges to zero almost surely at a uniform rate that depends only on $\rho$. In particular, using the proof of Proposition 4.1 in \citet{lit:appr_stoch} we know that for every $\epsilon > 0$ and $\delta > 0$, there exists $N = N(\epsilon, \delta, \rho)$ so that
\[
\Prob \l\{\exists n \geq N: \text{ } \|\tla_n\| \geq \epsilon \r\} \leq \delta,
\]
where $N$ can be chosen to be any constant greater than $\frac{6 \rho^2}{\delta \epsilon^2}$. This completes the proof of the Theorem.
\endproof

\section{Generalized No-Regret Algorithms} \label{sec:applic_rc}

The proposed approachability algorithms can be usefully applied to several generalized regret minimization problems, in which the computation of a projection onto the target
set is involved, but a response is readily obtainable. We start by providing
a generic description of the problem using a general set-valued goal function,
and then specialize the discussion to some specific goal functions that have
been considered in the recent literature.
We do not consider convergence rates in these section, but rather focus on asymptotic convergence results. Convergence rates can readily be derived by referring to our bounds in the previous sections; see, e.g., \eqref{eqn:conv1} or \eqref{eqn:conv3}.

Consider a repeated matrix game as before, where the vector-valued reward $r(a,z)$ is replaced with $v(a,z)\in \RR^K$.
Suppose that for each mixed action $q$ of the opponent, the agent has a
\emph{satisficing}\footnote{Borrowing from H.~Simon's terminology for
achieving satisfying (or good-enough) results in decision making.} \emph{payoff set} $V^*(q)\subset\RR^K$,
and at least one mixed action $p=p^*(q)$ that satisfies $v(p,q)\in V^*(q)$.
We refer to any such action as a \emph{response} of the agent to $q$.
Let $V^*:q\in\Delta(\Z) \mapsto V^*(q)$ denote the corresponding set-valued
{\em goal function}.
As before, let $V_n=v(a_n,z_n)$ and $\bar{V}_n=\frac{1}{n}\sum_{k=1}^n V_k$.
 A generalized no-regret strategy for
this model may be defined as strategy for the agent that ensures
$$\lim_{n\to\infty} d(\bar{V}_n, V^*(\bar{q}_n))=0 \quad \text{(a.\ s.)} $$
for any strategy of the opponent.
If such a strategy exists, we say that the goal function $V^*$ is {\em attainable}
by the agent.

The classical no-regret problem is obtained as a special case, with scalar reward
$v(a,z) = u(a, z)$ and satisficing payoff set
$V^*(q)=\{v\in \RR : v\geq v^*(q)\}$, where $v^*(q)\triangleq \max_p v(p,q)$.
As shown in Section \ref{sec:no_regret_Black},
this problem can be formulated as a particular case of approachability to the set $S=\{(v,q):v\in V^*(q)\}$, and existence of approaching strategies relies on convexity of the
function $v^*(q)$, which implies convexity of $S$.

A similar line of reasoning may be pursued for the generalized no-regret problem described above.
The no-regret property is clearly equivalent to approachability of the set
$S=\{(v,q):v\in V^*(q)\}$, in the game with reward vector $r(p,q)=(v(p,q),q)$.
As convexity of $S$, hence of $V^*$, plays an important role, we recall the following
definition for set-valued functions.
\begin{definition}[Convex hull]
A set valued function $V:q\in\Delta(\Z) \mapsto V(q)\subset \RR^K$ is \temph{convex} if
$\alpha V(q_1)+(1-\alpha)V(q_2)\subset V(\alpha q_1+(1-\alpha) q_2)$ for any
$q_1$, $q_2$ and $\alpha\in [0,1]$ (where the first plus sign stands for the Minkowski sum).
The \temph{convex hull} $V^c$ of $V$ is the minimal set-valued function
which is convex and contains $V$, in the sense that $V(q)\subset V^c(q)$ for each $q$.
Note that a minimal member (in the sense of set inclusion) exists, as the required property is invariant under intersections.
\end{definition}

The following claims follow easily from the definition of $S$.
%
\begin{proposition}\label{prop:GenRegret}
\begin{itemize}\item[]
\item[(i)]
The set $S=\{(v,q):v\in V^*(q)\}$ is a D-set. Hence, its convex hull
$\conv(S)$ is approachable.
\item[(ii)]
If the set-valued goal function $V^*$ is convex, then it is
attainable by the agent. In general, the convex hull $V^c$ of $V^*$ is
attainable by the agent.
\end{itemize}
\end{proposition}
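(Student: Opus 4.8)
The plan is to reduce both parts to the approachability machinery already developed, so that the only genuinely new ingredients are the verification of the D-set property through the given response map and the passage between product-space approachability and slice-wise attainability.

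For part (i), I would verify the D-set condition of Definition \ref{def:B-D-sets}(ii) directly. The reward vector of the lifted game is $r(p,q)=(v(p,q),q)$, so a point $r(p,q)$ lies in $S=\{(v,q):v\in V^*(q)\}$ precisely when $v(p,q)\in V^*(q)$. Given any $q\in\Delta(\Z)$, the standing hypothesis of the generalized model supplies a response $p^*(q)$ with $v(p^*(q),q)\in V^*(q)$, so $r(p^*(q),q)\in S$; hence $S$ is a D-set. Approachability of $\conv(S)$ is then immediate from Corollary \ref{col:d_set}.

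For the convex case of part (ii), I would first show that convexity of the set-valued map $V^*$ transfers to ordinary convexity of its graph $S$: given $(v_1,q_1),(v_2,q_2)\in S$ and $\alpha\in[0,1]$, the defining inclusion $\alpha V^*(q_1)+(1-\alpha)V^*(q_2)\subset V^*(\alpha q_1+(1-\alpha)q_2)$ places $\alpha v_1+(1-\alpha)v_2$ in $V^*(\alpha q_1+(1-\alpha)q_2)$, so the convex combination of the two points stays in $S$. Thus $S$ is a convex D-set, hence approachable by Theorem \ref{theo:appr_primal}(iii), and the response-based Algorithm \ref{algo:main} may be run against it. It then remains to translate approachability into attainability, i.e. to deduce $d(\bar V_n, V^*(\bar q_n))\to 0$ from $d(\bar R_n, S)\to 0$, where $\bar R_n=(\bar V_n,\bar q_n)$; one direction is trivial, since taking the slice at $q=\bar q_n$ gives $d(\bar R_n,S)\le d(\bar V_n, V^*(\bar q_n))$, and the substantive content is the converse. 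The general case reduces to this one: I would check that the graph of the convex hull $V^c$ equals $\conv(S)$, since the slice map $q\mapsto\{w:(w,q)\in\conv(S)\}$ is a convex set-valued function containing $V^*$, so by minimality it contains $V^c$, while $V^*\subset V^c$ makes the graph of $V^c$ a convex superset of $S$; as $p^*(q)$ is still a response into $V^c(q)\supset V^*(q)$, the graph of $V^c$ is a convex D-set and the convex case applies to $V^c$.

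The step I expect to be the main obstacle is exactly the converse translation in part (ii). When $\bar R_n$ is close to a point $(w,q')\in S$, the second coordinate forces $q'$ close to $\bar q_n$ and $w$ close to $\bar V_n$, but to conclude that $\bar V_n$ is close to the \emph{exact} slice $V^*(\bar q_n)$ one needs a Hausdorff (outer-semi)continuity property of the map $q\mapsto V^*(q)$ — the analogue of the continuity of $u^*$ invoked in the classical formulation of Section \ref{sec:no_regret_Black}. I would therefore isolate this regularity as the condition that makes the ``clearly equivalent'' assertion rigorous, verify that it holds in the motivating examples (where $V^*$ is a continuously varying half-space or polyhedron), and note that with it the almost-sure convergence $d(\bar R_n,S)\to 0$ guaranteed by the approaching strategy carries over to $d(\bar V_n, V^*(\bar q_n))\to 0$.
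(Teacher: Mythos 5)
Your argument follows the paper's own proof essentially verbatim: the D-set property is read off from the response map, $\conv(S)$ is approachable as a convex D-set, and part (ii) reduces to the identity $\conv(S)=\{(v,q):v\in V^c(q)\}$, which you justify by the same minimality argument the paper leaves implicit. Your additional observation --- that passing from $d(\bar R_n,S)\to 0$ to $d(\bar V_n,V^*(\bar q_n))\to 0$ requires an outer semicontinuity property of $q\mapsto V^*(q)$ (resp.\ $V^c$), in analogy with the continuity of $u^*$ invoked in Section \ref{sec:no_regret_Black} --- is a fair point that the paper's preceding assertion that attainability is ``clearly equivalent'' to approachability of $S$ silently assumes.
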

\proof
To see that $S$ is a D-set, note that by its definition, for any $q$ there exists
$p$ such that $v(p,q)\in V^*(q)$, hence $(v(p,q),q)\in S$.
Therefore $\conv(S)$ is a convex D-set, which is approachable by Theorem \ref{theo:appr_primal}. Claim (ii) now follows by verifying that
$\conv(S)=\{(v,q):v\in V^c(q)\}$.
\endproof

It follows that any {\em convex} goal function $V^*$ is attainable. When
$V^*$ is not convex, which is often the case, one may need to resort to a
relaxed goal function, namely the convex hull $V^c$.
The computation of $V^c$ and its suitability as a (relaxed) goal function need to
be examined for each specific problem.

As a consequence of Proposition \ref{prop:GenRegret}, $V^c$ (or $V^*$
itself when convex) can be attained by any approachability algorithm applied to
the convex set $\conv(S)=\{(v,q):v\in V^c(q)\}$.
However, the required projection unto that set may be hard
to compute. This is especially true when $V^*$ is non-convex, as $V^c$
my be hard to compute explicitly. In such cases, the
Response-Based Approachability algorithm
developed in this paper offers a convenient alternative, as it only requires to
compute at each stage a response of the agent to a certain mixed action of the opponent, relative to the original goal function $V^*$. As seen below,
this computation typically requires the solution of an optimization problem, which
is inherent in the definition of $V^*$.

We next specialize the discussion to certain concrete models of interest.

\begin{algorithm}[h!]
\caption{Generalized No-Regret Algorithm}
\label{algo:gen_no_regret}
\begin{trivlist}
\item \textbf{Input:} Desired reward sets, represented by the multifunction $V^*:\Delta(\Z) \rightarrow \RR^K$.
\item \textbf{Initialization:} At time step $n=1$, use arbitrary mixed action $p_1$, and set arbitrary values $v^*_1 \in \RR^{K}$, $q^*_1 \in \Delta(\Z)$.

\item \textbf{At time step $n = 2, 3, ...$}:
\begin{enumerate}
\item Set
\[
\lambda_{v, n-1} = \bar{v}^*_{n-1} - \bar{v}_{n - 1}, \quad \lambda_{q, n-1} = \bar{q}^*_{n-1} - \bar{q}_{n - 1},
\]
where
\[
\bar{v}^*_{n-1} = \frac{1}{n-1}\sum_{k = 1}^{n-1} v^*_k, \quad \bar{q}^*_{n-1} = \frac{1}{n-1}\sum_{k = 1}^{n-1} q^*_k.
\]
\item Solve the following zero-sum matrix game:
    \[
    p_n \in \argmax_{p \in \Delta(\A)} \min_{q \in \Delta(\Z)} \l(\lambda_{v, n-1} \cdot v(p, q) + \lambda_{q, n-1} \cdot q \r),
    \]
    \[
    q^*_n \in \argmin_{q \in \Delta(\Z)} \max_{p \in \Delta(\A)} \l(\lambda_{v, n-1} \cdot v(p, q) + \lambda_{q, n-1} \cdot q \r).
    \]
\item Choose action $a_n$ according to $p_n$.
\item Pick $p^*_n$ such that $v\l(p^*_n, q^*_n\r) \in V^*(q^*_n)$,
and set
\[
v^*_n = v(p^*_n, q^*_n).
\]
\end{enumerate}
\end{trivlist}
\end{algorithm}

\subsection{Global Cost Functions} \label{sec:global}

The following problem of regret minimization with global cost functions was
introduced in \citet{lit:global}.
Suppose that the goal of the agent is to minimize a
 general (i.e., non-linear) function of the average reward vector $\bar{V}_n$. In particular, we are given a continuous function $G:\RR^{K} \rightarrow \RR$, and the goal is to minimize $G(\bar{V}_n)$.
For example, $G$ may be some \emph{norm} of $\bar{V}_n$. We  define the best-cost-in-hindsight, given a mixed action $q$ of the opponent, as
\[
G^*(q) \triangleq \min_{p \in \Delta(\A)} G(v(p, q)),
\]
so that the satisficing payoff set may be defined as
\[
V^*(q) = \l\{v \in {\cal V}_0: G(v) \leq G^*(q) \r\},
\]
where ${\cal V}_0=\conv\{v(a,z)\}_{a\in\A, z\in\Z}$ is the set of feasible reward
vectors.
Clearly, the agent's response to $q$ is any mixed action that minimizes
$G(v(p,q))$, namely
\begin{equation} \label{eqn:resp_global}
p^*(q) \in \argmin_{p \in \Delta(\A)} G(v(p, q)).
\end{equation}

By Proposition \ref{prop:GenRegret}, the convex hull $V^c$ of $V^*$ is attainable
by the agent. The relation between $V^c$ and $V^*$ can be seen depend on convexity properties of $G$ and $G^*$. In particular, we have the following immediate result (a slight extension of \cite{lit:global}).
\begin{proposition}\label{prop:Vconvexity}
\begin{itemize}\item[]
\item[(i)] For $q\in\Delta(\Z)$,
\[
V^c(q) \subset \l\{v \in {\cal V}_0 \,:\, \conv(G)(v) \leq \conc(G^*)(q) \r\},
\]
where $\conv(G)$ and $\conc(G^*)$ are the lower convex hull of $G$ and the upper concave hull of $G^*$, respectively.
\item[(ii)] Consequently, if $G(v)$ is a convex function over $v\in{\cal V}_0$,
then the relaxed goal function $\conc(G^*)(q)$ is attainable.
\item[(iii)] If, furthermore, $G^*(q)$ is a concave function of $q$, then
$V^c=V^*$, and the goal function $G^*(q)$ is attainable.
\end{itemize}
\end{proposition}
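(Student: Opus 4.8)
The plan is to exhibit the set on the right-hand side of (i) as a convex set-valued function that contains $V^*$, and then invoke the minimality of the convex hull $V^c$. Concretely, define
\[
W(q) \triangleq \l\{ v \in {\cal V}_0 \,:\, \conv(G)(v) \leq \conc(G^*)(q) \r\}, \quad q \in \Delta(\Z).
\]
Since $V^c$ is by definition the \emph{smallest} convex set-valued function with $V^*(q)\subset V^c(q)$ for all $q$, it suffices to check the two properties (a) $V^*(q) \subset W(q)$ for every $q$, and (b) $W$ is convex in the set-valued sense; together these force $V^c(q)\subset W(q)$, which is exactly (i).

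Property (a) is immediate from the definitions of the two hulls: the lower convex hull satisfies $\conv(G)\leq G$ pointwise and the upper concave hull satisfies $\conc(G^*)\geq G^*$, so any $v\in V^*(q)$ (that is, $v\in{\cal V}_0$ with $G(v)\leq G^*(q)$) obeys $\conv(G)(v)\leq G(v)\leq G^*(q)\leq \conc(G^*)(q)$, hence $v\in W(q)$. For property (b), take $v_i\in W(q_i)$, $i=1,2$, and $\alpha\in[0,1]$, and set $v=\alpha v_1+(1-\alpha)v_2$, $q=\alpha q_1+(1-\alpha)q_2$. Then $v\in{\cal V}_0$ by convexity of ${\cal V}_0$, and using convexity of $\conv(G)$ followed by concavity of $\conc(G^*)$,
\[
\conv(G)(v)\leq \alpha\,\conv(G)(v_1)+(1-\alpha)\,\conv(G)(v_2)\leq \alpha\,\conc(G^*)(q_1)+(1-\alpha)\,\conc(G^*)(q_2)\leq \conc(G^*)(q),
\]
so $v\in W(q)$; this is precisely $\alpha W(q_1)+(1-\alpha)W(q_2)\subset W(q)$. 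This establishes (i).

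Parts (ii) and (iii) then follow by specializing the hulls. If $G$ is convex over ${\cal V}_0$ then $\conv(G)=G$ there, so (i) reads $V^c(q)\subset\l\{v\in{\cal V}_0:G(v)\leq\conc(G^*)(q)\r\}$, and the latter set is exactly the satisficing set associated with the relaxed goal function $\conc(G^*)$. Because $V^c$ is attainable by Proposition \ref{prop:GenRegret}, and the point-to-set distance is monotone under set inclusion, $d(\bar V_n, \l\{v:G(v)\leq\conc(G^*)(\bar q_n)\r\})\leq d(\bar V_n, V^c(\bar q_n))\to 0$, so $\conc(G^*)$ is attainable, giving (ii). If in addition $G^*$ is concave then $\conc(G^*)=G^*$, so the right-hand side of (i) collapses to $V^*(q)$ itself, yielding $V^c(q)\subset V^*(q)$; combined with the trivial inclusion $V^*(q)\subset V^c(q)$ this gives $V^c=V^*$, and attainability of $V^c$ is attainability of $V^*$, i.e. of the goal $G^*$, which is (iii).

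The main obstacle — really the only nontrivial point — is the verification that $W$ is convex as a set-valued function, where the roles of the two hulls must interlock correctly: convexity of the lower hull $\conv(G)$ bounds $\conv(G)(v)$ from above by the convex combination of its values, while concavity of the upper hull $\conc(G^*)$ bounds that same convex combination from above by $\conc(G^*)(q)$. One should also be slightly careful that the sets involved are closed, so that the distance statements in (ii)--(iii) are meaningful; this follows from continuity of $G$ and of its hulls together with closedness of ${\cal V}_0$.
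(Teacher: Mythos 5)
Your proof is correct, and it supplies exactly the argument the paper leaves implicit: the paper states this proposition as an ``immediate result'' with no written proof, and the intended route is precisely yours --- show the right-hand side is a convex set-valued function containing $V^*$ and invoke the minimality of $V^c$ from the paper's definition of the convex hull of a set-valued map. The key interlocking of the two hulls (convexity of $\conv(G)$ pushing the combination up, concavity of $\conc(G^*)$ absorbing it) and the monotonicity of the point-to-set distance under inclusion for parts (ii)--(iii) are handled correctly.
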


Clearly, if $G^*(q)$ is not concave, the attainable goal function is weaker than the
original one. Still, this relaxed goal is meaningful, at least in cases
where $G(v)$ is convex (case $(ii)$ above), so that $\conc(G^*)(q)$ is attainable. Noting that
$G^*(q) \leq \max_{q'}\min_{p} G(v(p,q'))$, it follows
that
\begin{equation}\label{eqn:security}
\conc(G^*)(q) \leq \max_{q'\in\Delta(\Z)}\min_{p\in\Delta(\A)} G(v(p,q'))
\leq \min_{p\in\Delta(\A)} \max_{q'\in\Delta(\Z)} G(v(p,q')) \,.
\end{equation}
The latter min-max bound is just the security level of the agent in the repeated
game, namely the minimal value of $G(\bar{V}_n)$ that can be secured
(as $n\to\infty)$ by playing a {\em fixed} (non-adaptive) mixed action $q'$.
Note that the second inequality in Equation \eqref{eqn:security} will be
strict except for special cases where the min-max theorem holds for $G(v(p,q))$
(which is hardly expected if $G^*(q)$ is non-concave).

Convexity of $G(v)$ depends directly on its definition, and will hold for
cases of interest such as norm functions. Concavity of $G^*(q)$,
on the other hand, is more demanding and will hold only in special cases.
We give below two examples, one in which $G(v)$ is convex but $G^*(q)$
is not necessarily concave, and one in which both properties are satisfied.
In the next subsection we discuss an example where neither is true.

\begin{example}[Absolute Value] {\rm
Let $v:\A \times \Z \rightarrow \RR$ be a scalar reward function,
and suppose that we wish to minimize the deviation of the average reward
$\bar{V}_n$ from a certain set value, say 0.
Define then $G(v) = |v|$, and note that $G$ is a convex function. Now,
\[
G^*(q) \triangleq \min_{p \in \Delta(\A)} |v(p, q)| =
\left\{ \begin{array}{lcl}
          \min_{a\in\A} v(a,q) & : & v(a,q)> 0,\, a\in \A  \\
          \min_{a\in\A}(- v(a,q)) & : & v(a,q) < 0,\, a\in \A \\
          0 & : & \text{otherwise}
        \end{array}\right.
\]
The response $p^*(q)$ of the agent is obvious from these relations.
We can observe two special case in this example:
\begin{itemize}
\item[$(i)$] The problem reduces to the classical no-regret problem if
the rewards $v(a,z)$ all have the same sign (positive or negative), as the absolute
value can be removed. Indeed, in this case $G^*(q)$ is concave, as a minimum
of linear functions.
\item[$(ii)$]
 If the set $\{v(a,q),a\in\A\}$ includes elements of opposite signs (0 included)
for each $q$, then $G^*=0$, and the point $v=0$ becomes attainable.
\end{itemize}
In general, however, $|v(p, q)|$ may be a {\em strictly} convex function of $q$
for a fixed $p$, and the minimization above need not lead to a concave function.
In that case, we can ensure only the attainability of $\conc(G^*)(q)$.

We note that the computation of $\conc(G^*)$ may be fairly complicated in general,
which implies the same for computing the projection onto the associated goal set
$S=\{(v,q): |v| \leq \conc(G^*)(q)\}$. However, these computations are not needed
 in the proposed Response-Based Approachability algorithm, where the required computation of the agent's response $p^*(q)$ is straightforward.
} \end{example}

\begin{example}[Load Balancing] {\rm
The following model was considered in \citet{lit:global}, motivated by load
balancing and job scheduling problems.
Consider a scalar loss function $\ell:\A \times \Z \rightarrow \RR$, with $\ell(a,z)\geq 0$, and define a corresponding vector-valued reward function $v:\A \times \Z \rightarrow \RR^{|\A|}$, where $v(a, z)$ has $\ell(a, z)$ at entry $a$ and $0$ otherwise:
\[
v(a, z)[a'] = \begin{cases}
\ell(a, z), & a = a'\\
0, & \text{otherwise.}
\end{cases}
\]
The average reward vector then represents the the average loss of the agent on different actions. Namely,
\[
\bar{V}_n[a] = \frac{1}{n} \sum_{k=1}^n \I\l\{a = a_k \r\} \ell(a, z_k).
\]
Also, note that
\[
v(p, q) = \{p(a)\ell(a, q) \}_{a \in \A} \triangleq p \odot \ell(\cdot, q),
\]
and
\[
G^*(q) = \min_{p \in \Delta(\A)} G(v(p, q)) = \min_{p \in \Delta(\A)} G\l(p \odot \ell(\cdot, q)\r).
\]
\citet{lit:global} analyzed the case where $G$ is either the $d$-norm with $d > 1$, or the infinity norm (the {\em makespan}). Clearly $G$ is convex here.
Furthermore, it was shown that the function
\[
F^*(\ell) \triangleq \min_{p \in \Delta(\A)} G\l(p \odot \ell\r)
\]
is concave in $\ell$.
Now, since $G^*(q) = F^*(\ell(\cdot, q))$ and $\ell(\cdot, q)$ is linear in $q$, then $G^*(q)$ is also concave in $q$.

The agent's response is easily computed for this problem:
The response $p=p^*(q)$ is generally mixed, with $p_a$ proportional
to $\ell(a,q)^{-d/(d-1)}$ for $d<\infty$, and to $\ell(a,q)^{-1}$ for the
infinity norm.
} \end{example}

\subsection{Reward-to-Cost Maximization} \label{sec:ratio}

Consider the repeated game model as before, where the goal of the agent is to maximize the ratio $\bar{U}_n/\bar{C}_n$. Here, $\bar{U}_n$ is the average of a scalar reward function $u(a, z)$ and $\bar{C}_n$ is the average of a scalar positive cost function $c(a, z)$.
This problem is mathematically equivalent to the problem of regret minimization in repeated games with variable stage duration considered in \citet{lit:nahum_var} (in that paper, the cost was specifically taken as the stage duration).
Observe that this problem is a particular case of the global cost function problem presented in Section \ref{sec:global}, with vector-valued payoff function $v(a, z) = (u(a, z), c(a, z))$ and $G(v) = -u/c$. However, here $G(v)$ is not convex in $v$. We will therefore need to apply
specific analysis in order to obtain similar bounds to those of Proposition \ref{prop:Vconvexity}$(ii)$.

We mention that similar bounds to the ones established below were obtained in \citet{lit:nahum_var}. The algorithm there was based on playing a best-response to calibrated forecasts of the opponent's mixed actions. As mentioned in the introduction, obtaining these forecasts is computationally hard, and the present formulation offers a considerably less demanding alternative.

Denote
$$
\rho(a,q) \triangleq \frac{u(a,q)}{c(a,q)}, \quad \rho(p,q) \triangleq \frac{u(p,q)}{c(p,q)}.
$$
and let
$$
\mathrm{val}(\rho) \triangleq
\max_{p \in \Delta(\A)} \min_{q \in \Delta(\Z)}  \rho(p,q)
= \min_{q \in \Delta(\Z)} \max_{p \in \Delta(\A)} \rho(p,q)
$$
(the last equality is proved in the above-mentioned paper; note that $\rho(p,q)$
is not generally concave-convex). It may be seen that $\mathrm{val}(\rho)$ is
the value of the zero-sum repeated game with payoffs $\bar{U}_n/\bar{C}_n$,
hence serves as a security level for the agent. A natural goal for the agent would
be to improve on $\mathrm{val}(\rho)$ whenever the opponent's actions deviate (in terms
of their empirical mean) from the minimax optimal strategy.

Let
\[
\rho^*(q) \triangleq \max_{p \in \Delta(A)} \rho(p,q)
\]
denote the \emph{best ratio-in-hindsight}.
We apply Algorithm \ref{algo:gen_no_regret}, with $v=(u,c)$ and the satisficing payoff set
\[
V^*(q) = \l\{v = (u, c): \text{ } \frac{u}{c} \geq \rho^*(q) \r\}
\]
(observe that both $\rho^*(q)$ and $V^*(q)$ are non-convex functions in general).
The agent's response is given by any mixed action
$$
p^*(q) \in P^*(q) \triangleq \argmax_{p \in \Delta(\A)} \rho(p, q).
$$
It is easily verified that the maximum can always be obtained here in pure actions (\cite{lit:nahum_var}; see also the proof of Prop.~\ref{prop:ratio} below).
Denote
$$
A^*(q) \triangleq \argmax_{a\in \A} \rho(a,q),
$$
and define the following relaxation of $\rho^*(q)$:
\begin{equation} \label{eqn:rho_relaxed}
{\rho_1}(q) \triangleq \inf\l\{\frac{\sum_{j=1}^J u(a_j,q_j)}{\sum_{j=1}^J c(a_j,q_j)} \,:\,
1\leq J <\infty, \, q_j \in \Delta(\Z), \, \frac{1}{J} \sum_{j=1}^{J} q_j =q,\, a_j\in A^*(q_j) \r\}.
\end{equation}
Clearly, ${\rho_1}(q) \leq \rho^*(q)$. We will show below that ${\rho_1}$ is attained by applying Algorithm \ref{algo:gen_no_regret} to this problem. First, however, we compare $\rho_1$ to the security level $\mathrm{val}(\rho)$.
%
\vspace*{0.1cm}
\begin{lemma} \label{lem:ratio_relaxed_properties}
\begin{itemize}
\item[]
\item[(i)] $ {\rho_1}(q) \geq \mathrm{val}(\rho)$ for all $q\in\Delta(\Z)$.
\item[(ii)] $ {\rho_1}(q) > \mathrm{val}(\rho)$ whenever
${\rho^*}(q) > \mathrm{val}(\rho)$.
\item [(iii)]  If $q$ corresponds to a pure action $z$, namely $q=\delta_z$,
then $ {\rho_1}(q) =  \rho^*(q)$.
\item[(iv)]$ {\rho_1}(q)$ is a continuous function of $q$.
\end{itemize}
\end{lemma}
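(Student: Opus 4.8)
The plan is to handle parts (i) and (ii) together through an auxiliary \emph{convex} function, dispose of (iii) by a vertex argument, and concentrate the effort on the continuity claim (iv), which I expect to be the main obstacle.

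For (i) and (ii), write $m=\mathrm{val}(\rho)$ and set $\phi_a(z)=u(a,z)-m\,c(a,z)$, so that $u(a,q')-m\,c(a,q')=\langle q',\phi_a\rangle$ is linear in $q'$. Define $\Psi(q')=\max_{a\in\A}\langle q',\phi_a\rangle$; as a maximum of linear functions, $\Psi$ is convex. Since $\rho^*(q')=\max_a\rho(a,q')\geq \min_{q''}\rho^*(q'')=m$ for every $q'$, the ratio-maximizing action witnesses $\Psi(q')\geq 0$, and a short computation gives the two-sided bound $c_{\min}(\rho^*(q')-m)\leq \Psi(q')\leq c_{\max}(\rho^*(q')-m)$, where $c_{\min}=\min_{a,z}c(a,z)>0$ and $c_{\max}=\max_{a,z}c(a,z)$. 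Now fix any admissible decomposition in \eqref{eqn:rho_relaxed}, with $\frac1J\sum_j q_j=q$ and $a_j\in A^*(q_j)$, and write $U=\sum_j u(a_j,q_j)$, $C=\sum_j c(a_j,q_j)$. Each summand of $U-mC$ equals $c(a_j,q_j)(\rho^*(q_j)-m)\geq 0$, which immediately yields $U/C\geq m$ and hence (i). For the strict bound I would lower bound each summand by $\tfrac{c_{\min}}{c_{\max}}\Psi(q_j)$, sum, and apply Jensen's inequality to the convex $\Psi$ with $\frac1J\sum_j q_j=q$:
\[
U-mC \;\ge\; \frac{c_{\min}}{c_{\max}}\sum_j\Psi(q_j)\;\ge\;\frac{c_{\min}}{c_{\max}}\,J\,\Psi(q).
\]
Dividing by $C\leq c_{\max}J$ and using $\Psi(q)\geq c_{\min}(\rho^*(q)-m)$ gives $\frac{U}{C}-m\geq \frac{c_{\min}^2}{c_{\max}^2}(\rho^*(q)-m)$, a bound independent of the decomposition; taking the infimum proves $\rho_1(q)-m\geq \frac{c_{\min}^2}{c_{\max}^2}(\rho^*(q)-m)$, which is strictly positive exactly when $\rho^*(q)>m$, establishing (ii).

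Part (iii) is a vertex argument. Since $\delta_z$ is an extreme point of $\Delta(\Z)$, any representation $\frac1J\sum_j q_j=\delta_z$ with $q_j\in\Delta(\Z)$ forces $q_j=\delta_z$ for all $j$; then each $a_j\in A^*(\delta_z)$ has $u(a_j,z)/c(a_j,z)=\rho^*(\delta_z)$, so the ratio of sums $\frac{\sum_j u(a_j,z)}{\sum_j c(a_j,z)}$ is a cost-weighted average of identical values and equals $\rho^*(\delta_z)$. Every admissible decomposition thus gives exactly $\rho^*(\delta_z)$, whence $\rho_1(\delta_z)=\rho^*(\delta_z)$.

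For the continuity claim (iv), the first move is to reformulate the infimum in \eqref{eqn:rho_relaxed}. Replacing a point by several copies shows that the uniform weights $1/J$ may be replaced by arbitrary rational, hence (by continuity) arbitrary, convex weights, so $\rho_1(q)=\inf\{\frac{\sum_i w_i u(a_i,q_i)}{\sum_i w_i c(a_i,q_i)} : w\in\Delta,\ \sum_i w_i q_i=q,\ a_i\in A^*(q_i)\}$. Upper semicontinuity is then transparent: given a near-optimal weighted decomposition of $q_0$, I would mix it with a single correcting atom $(q^\sharp,a^\sharp)$, $a^\sharp\in A^*(q^\sharp)$, at weight $t$, choosing $q^\sharp=q_0+t^{-1}(q-q_0)$ so the barycenter becomes $q$; letting $q\to q_0$ with $t\to 0$ makes the correction negligible and the ratio converges to that of the original decomposition. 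Lower semicontinuity is where I expect the real work. The key is a Carathéodory reduction: the relevant triples $(q_i,c(a_i,q_i),u(a_i,q_i))$ live in a space of dimension $|\Z|+1$, so at most $|\Z|+2$ atoms suffice and the minimum is attained. The feasible set of such bounded decompositions is compact, and its graph over $q$ is closed because the graph $\{(q',a):a\in A^*(q')\}$ of the argmax correspondence is closed (being the set where the continuous $\rho(a,q')$ meets the continuous $\rho^*(q')$). A standard maximum-theorem argument for the value of a minimization over an upper semicontinuous, compact-valued correspondence then yields lower semicontinuity, and combined with usc gives continuity. The main obstacle is precisely this step: the number of components in \eqref{eqn:rho_relaxed} is a priori unbounded, so compactness is unavailable until the Carathéodory bound is in place, and one must verify carefully that restricting the atoms to the closed graph of $A^*$ neither enlarges nor shrinks the infimum.
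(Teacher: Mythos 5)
Your proof is correct in substance but takes a genuinely different route from the paper. The paper's own proof is essentially a reduction-plus-citation: it regroups the terms of \eqref{eqn:rho_relaxed} by action, rewriting ${\rho_1}(q)$ as $\min\bigl\{\sum_a \alpha_a u(a,q_a)/\sum_a \alpha_a c(a,q_a)\bigr\}$ over $\alpha\in\Delta(\A)$ and $q_a\in\conv(Q_a)$ with $\sum_a\alpha_a q_a=q$, recognizes this as the \emph{calibration envelope} of \citet{lit:nahum_var}, and invokes Lemma 6.1 and Proposition 6.4 of that paper for all four claims. You instead prove everything from scratch: parts (i)--(ii) via the convex auxiliary function $\Psi(q')=\max_a\l(u(a,q')-m\,c(a,q')\r)$ and Jensen, part (iii) by extremality of $\delta_z$, and part (iv) by a Carath\'eodory reduction plus a closed-graph/Berge argument. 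What your approach buys is self-containedness and, in (ii), a \emph{quantitative} strengthening, ${\rho_1}(q)-\mathrm{val}(\rho)\geq (c_{\min}/c_{\max})^2\bigl(\rho^*(q)-\mathrm{val}(\rho)\bigr)$, which the paper does not state; what the paper's route buys is brevity and reuse of machinery already developed for the variable-duration game. Note that your arguments for (i)--(iii) silently use the fact that $\max_{p}\rho(p,q)$ is attained at a pure action (so that $\rho(a_j,q_j)=\rho^*(q_j)$ for $a_j\in A^*(q_j)$); this is true and is invoked elsewhere in the paper, but you should state it.

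Two points in (iv) deserve tightening. First, the passage from uniform weights $1/J$ to arbitrary convex weights: rational weights are exact by replication, but approximating an irrational weight vector by a rational one perturbs the barycenter $\sum_i w_i q_i$ away from $q$, so the two infima are not \emph{immediately} equal; you need either a small correcting-atom argument here too, or to work throughout with the compact set $E=\{(q',u(a,q'),c(a,q')): a\in A^*(q')\}$ and show both definitions compute $\min\{U/C:(q,U,C)\in\conv(E)\}$. (The paper's reformulation has exactly the same unaddressed subtlety, so you are not below its standard.) Second, in the upper-semicontinuity step the correcting atom $q^\sharp=q_0+t^{-1}(q-q_0)$ lies in $\Delta(\Z)$ only when $\|q-q_0\|_\infty\leq t\cdot\min\{q_0(z):q_0(z)>0\}$, so the limits must be taken in the order: fix the near-optimal decomposition and $t$, let $q\to q_0$, then $t\to 0$. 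Neither issue affects the validity of the overall strategy.
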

\begin{proof}
To prove this Lemma, we first derive a more convenient expression for ${\rho_1}(q)$.
For $a \in \A$, let
\[
Q_a \triangleq \l\{q \in \Delta(\Z)\, : \, a \in A^*(q) \r\}
\]
denote the (closed) set of mixed actions to which $a$ is a best-response action. Observe that for given $J$, $q_1, ..., q_J$ and $a_j \in A^*(q_j)$, we have
\[
\frac{\sum_{j=1}^J u(a_j,q_j)}{\sum_{j=1}^J c(a_j,q_j)} = \frac{\sum_{a \in \A} N_a u(a, \bar{q}_a)}{\sum_{a \in \A} N_a c(a, \bar{q}_a )},
\]
where
\[
N_a = \sum_{j = 1}^J \I \l\{a_j = a \r\},
\quad
\bar{q}_a = \frac{1}{N_a} \sum_{j = 1}^J \I \l\{a_j = a \r\} q_j\,.
\]
Note that $\bar{q}_a \in \conv(Q_a)$ as it is a convex combination of $q_j \in Q_a$. Therefore, the definition in \eqref{eqn:rho_relaxed} is equivalent to
\[
{\rho_1}(q) = \min\l\{\frac{\sum_{a \in \A} \alpha_a u(a, q_a)}{\sum_{a \in \A} \alpha_a c(a, q_a)} \, : \, \alpha \in \Delta(\A), q_a \in \conv(Q_a), \sum_{a \in \A} \alpha_a q_a = q \r\}.
\]
Now, this is exactly the definition of the so-called \emph{calibration envelope} in \citet{lit:nahum_var}, and the claims of the lemma follow by Lemma 6.1 and Proposition 6.4 there.
\end{proof}

It may be seen that ${\rho_1}(q)$ does not fall below the security level
$\mathrm{val}(q)$, and is strictly above it when $q$ is not a minimax action
with respect to $\rho(p,q)$.
Furthermore, at the vertices vertices of $\Delta(\Z)$, it actually coincides with the best ratio-in-hindsight $\rho^*(q)$.

We proceed to the following result that proves the attainability of ${\rho_1}(q)$.

\begin{proposition} \label{prop:ratio}
Consider Algorithm \ref{algo:gen_no_regret} applied to the model of the present Subsection. Suppose that the response action to $q^*_n$ is chosen as any action $p^*_n \in P^*(q^*_n)$ and consequently the target point is set to $v^*_n = \l(u(p^*_n, q^*_n), c(p^*_n, q^*_n)\r)$. Then,
$$
\liminf_{n\to\infty} \l(\frac{\bar{U}_n}{\bar{C}_n} - {\rho_1}(\bar{q}_n)\r) \geq 0 \quad (a.s.)
$$
for any strategy of the opponent.
\end{proposition}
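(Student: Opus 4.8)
The plan is to recognize that Algorithm \ref{algo:gen_no_regret}, specialized to the present model, is an instance of the Response-Based scheme of Proposition \ref{prop:1}, deduce the vector convergence $\lambda_n\to 0$, and then translate this into the scalar ratio bound using the combinatorial structure of $\rho_1$. Concretely, Algorithm \ref{algo:gen_no_regret} is Algorithm \ref{algo:main} applied to the reward vector $r(p,q)=(v(p,q),q)=(u(p,q),c(p,q),q)\in\RR^2\times\Delta(\Z)$: the steering direction $\lambda_{n-1}=(\lambda_{v,n-1},\lambda_{q,n-1})$ equals $\bar{r}^*_{n-1}-\bar{r}_{n-1}$, and the game solved in step~2 has payoff $\lambda_{n-1}\cdot r(p,q)$. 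Since $(p_n,q^*_n)$ are equilibrium strategies of that game, condition \eqref{eqn:gen} holds and Proposition \ref{prop:1}(i) gives $\|\lambda_n\|\to 0$ at rate $O(1/\sqrt{n})$. Reading off the components yields $\bar{v}_n-\bar{v}^*_n\to 0$ and $\bar{q}_n-\bar{q}^*_n\to 0$, where $\bar{v}^*_n=(\bar{U}^*_n,\bar{C}^*_n)$ with $\bar{U}^*_n=\frac1n\sum_k u(p^*_k,q^*_k)$, $\bar{C}^*_n=\frac1n\sum_k c(p^*_k,q^*_k)$, and $\bar{q}^*_n=\frac1n\sum_k q^*_k$.

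Next I would pass from the smoothed reward $\bar{v}_n$ to the actual average $(\bar{U}_n,\bar{C}_n)$ via the martingale convergence argument recorded after Definition \ref{def:appr_set}, obtaining $\bar{U}_n-\bar{U}^*_n\to 0$ and $\bar{C}_n-\bar{C}^*_n\to 0$ almost surely. Since $c>0$ on a finite domain, every cost average lies in an interval $[c_{\min},c_{\max}]$ with $c_{\min}>0$; writing $\frac{\bar{U}_n}{\bar{C}_n}-\frac{\bar{U}^*_n}{\bar{C}^*_n}=\frac{\bar{C}^*_n(\bar{U}_n-\bar{U}^*_n)+\bar{U}^*_n(\bar{C}^*_n-\bar{C}_n)}{\bar{C}_n\bar{C}^*_n}$ then shows that the vector gaps convert to $\frac{\bar{U}_n}{\bar{C}_n}-\frac{\bar{U}^*_n}{\bar{C}^*_n}\to 0$ almost surely.

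The heart of the argument is the pathwise, deterministic inequality $\frac{\bar{U}^*_n}{\bar{C}^*_n}\geq {\rho_1}(\bar{q}^*_n)$, which I would prove by exhibiting a feasible decomposition attaining the form that defines $\rho_1$. Using the mediant inequality for the linear-fractional map $p\mapsto\rho(p,q)=\frac{\sum_a p(a)u(a,q)}{\sum_a p(a)c(a,q)}$, any $p^*_k\in P^*(q^*_k)$ is supported on $A^*(q^*_k)$, so $q^*_k\in Q_a$ whenever $p^*_k(a)>0$. Setting $\alpha_a=\frac1n\sum_k p^*_k(a)$ and $q_a=(n\alpha_a)^{-1}\sum_k p^*_k(a)q^*_k$ for $\alpha_a>0$, linearity of $u(a,\cdot)$ and $c(a,\cdot)$ gives $\bar{U}^*_n=\sum_a\alpha_a u(a,q_a)$ and $\bar{C}^*_n=\sum_a\alpha_a c(a,q_a)$, with $\alpha\in\Delta(\A)$, $q_a\in\conv(Q_a)$, and $\sum_a\alpha_a q_a=\bar{q}^*_n$. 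Since $\rho_1(\bar{q}^*_n)$ is the minimum of $\frac{\sum_a\alpha_a u(a,q_a)}{\sum_a\alpha_a c(a,q_a)}$ over exactly such decompositions (the action-grouped reformulation obtained in the proof of Lemma \ref{lem:ratio_relaxed_properties}), the inequality follows.

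Finally I would combine the pieces by writing $\frac{\bar{U}_n}{\bar{C}_n}-\rho_1(\bar{q}_n)$ as the sum of $\bigl(\frac{\bar{U}_n}{\bar{C}_n}-\frac{\bar{U}^*_n}{\bar{C}^*_n}\bigr)\to 0$, the nonnegative term $\bigl(\frac{\bar{U}^*_n}{\bar{C}^*_n}-\rho_1(\bar{q}^*_n)\bigr)\geq 0$, and $\bigl(\rho_1(\bar{q}^*_n)-\rho_1(\bar{q}_n)\bigr)\to 0$, the last by continuity of $\rho_1$ (Lemma \ref{lem:ratio_relaxed_properties}(iv)) together with $\bar{q}_n-\bar{q}^*_n\to 0$. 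Dropping the nonnegative middle term gives a lower bound tending to $0$, whence $\liminf_{n\to\infty}\bigl(\frac{\bar{U}_n}{\bar{C}_n}-\rho_1(\bar{q}_n)\bigr)\geq 0$ almost surely. The main obstacle is the third step: correctly packaging the per-stage (possibly mixed) responses $\{p^*_k\}$ into a single admissible decomposition of $\bar{q}^*_n$, so that the \emph{infimum} defining $\rho_1$ supplies the bound in the direction needed — this is where the support property of best responses and the grouped reformulation of $\rho_1$ are both essential.
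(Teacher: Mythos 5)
Your proof is correct and follows the same skeleton as the paper's: invoke Proposition \ref{prop:1}/Theorem \ref{theo:algo} to get $\lambda_n\to 0$, pass to the actual averages and convert the componentwise tracking into convergence of the ratios (using that $c$ is bounded away from zero), lower-bound the target ratio by $\rho_1(\bar{q}^*_n)$, and finish with continuity of $\rho_1$ together with $\bar{q}_n-\bar{q}^*_n\to 0$. The one place you genuinely diverge is the middle inequality $\bar{U}^*_n/\bar{C}^*_n \ge \rho_1(\bar{q}^*_n)$: the paper introduces the auxiliary quantity $\rho_2$ (the analogue of $\rho_1$ with mixed responses $p_j\in P^*(q_j)$), notes that the bound against $\rho_2(\bar{q}^*_n)$ is immediate from the definition with $J=n$, and then argues $\rho_2=\rho_1$ by a linear-fractional/vertex argument showing the infimum is attained at pure actions. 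You instead bypass $\rho_2$ entirely and repackage the mixed responses into an explicit admissible decomposition for the action-grouped form of $\rho_1$ derived in the proof of Lemma \ref{lem:ratio_relaxed_properties}, via $\alpha_a=\frac1n\sum_k p^*_k(a)$ and $q_a\propto\sum_k p^*_k(a)q^*_k$. Both arguments rest on the same underlying fact --- that $P^*(q)$ consists of mixtures over $A^*(q)$, so $q^*_k\in Q_a$ wherever $p^*_k(a)>0$ --- and your version is somewhat more explicit about why the grouping is admissible (you verify $\alpha\in\Delta(\A)$, $q_a\in\conv(Q_a)$, $\sum_a\alpha_a q_a=\bar{q}^*_n$, and the linearity identities for $\bar{U}^*_n,\bar{C}^*_n$ directly), while the paper's version is shorter once $\rho_2=\rho_1$ is granted. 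Either way the argument is complete; no gap.
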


\begin{proof}
Algorithm \ref{algo:gen_no_regret} ensures that, with probability 1,
\begin{equation}\label{eqn:q}
\l\|\bar{q}_n - \bar{q}^*_n \r\| \to 0,
\end{equation}
\begin{equation}\label{eqn:uc}
\l|\bar{U}_n - \frac{1}{n}\sum_{k = 1}^n u(p^*_k,q^*_k) \r| \to 0,\quad
\l|\bar{C}_n - \frac{1}{n}\sum_{k = 1}^n c(p^*_k,q^*_k) \r|\to 0;
\end{equation}
see Theorem \ref{theo:algo} and recall the asymptotic equivalence of smoothed and
non-smoothed averages.
Noting that the cost $c$ is positive and bounded away zero, \eqref{eqn:uc} implies that
\begin{equation} \label{eqn:ratio_limit}
\lim_{n \to \infty} \frac{\bar{U}_n}{\bar{C}_n}  = \lim_{n \to \infty}  \frac{\sum_{k = 1}^n r(p^*_k,q^*_k)}{\sum_{k = 1}^n c(p^*_k,q^*_k)}.
\end{equation}
Let
\begin{equation} \label{eqn:rho_relaxed_smooth}
{\rho_2}(q) \triangleq \inf\l\{\frac{\sum_{j=1}^J u(p_j,q_j)}{\sum_{j=1}^J c(p_j,q_j)} \,:\,
1\leq J <\infty, \, q_j \in \Delta(\Z), \, \frac{1}{J} \sum_{j=1}^{J} q_j =q,\, p_j\in P^*(q_j) \r\}.
\end{equation}
Clearly,
\begin{equation} \label{eqn:rho_lower}
\frac{\sum_{k = 1}^n r(p^*_k,q^*_k)}{\sum_{k = 1}^n c(p^*_k,q^*_k)} \geq {\rho_2}(\bar{q}^*_n).
\end{equation}

Also, it may be verified that the infimum in (\ref{eqn:rho_relaxed_smooth}) is obtained in pure actions $a_j \in A^*(q_j)$, implying that
\begin{equation} \label{eqn:pure_act}
{\rho_2}(q) = {\rho_1}(q).
\end{equation}
Indeed, note that
\[
\frac{\sum_{j=1}^J u(p_j,q_j)}{\sum_{j=1}^J c(p_j,q_j)} \leq K
\]
is equivalent to
\[
\sum_{j=1}^J u(p_j,q_j) - K\sum_{j=1}^J c(p_j,q_j) \leq 0.
\]
Now, consider minimizing the left-hand-side over $p_j \in P^*(q_j)$. Due to the linearity in $p_j$ and the fact that $P^*(q_j)$ is just the mixture of actions in $A^*(q_j)$, the optimal actions are pure (that is, in $A^*(q_j)$).

Combining \eqref{eqn:ratio_limit}, \eqref{eqn:rho_lower}, and \eqref{eqn:pure_act}, we obtain
$$
\liminf_{n\to\infty} \l(\frac{\bar{U}_n}{\bar{C}_n} - {\rho_1}(\bar{q}^*_n)\r) \geq 0.
$$
The proof is concluded by using \eqref{eqn:q} and the continuity of ${\rho_1}$ (see Lemma \ref{lem:ratio_relaxed_properties}).
\end{proof}

\subsection{Constrained Regret Minimization}

We finally address the constrained regret minimization problem,
introduced in \citet{lit:shie_constr}. Here, in addition to
the scalar reward function $u$, we are given a \emph{vector-valued} cost function
$c:\A \times \Z \rightarrow \RR^{s }$. We are also given a closed and convex set $\Gamma \subseteq \RR^{s }$, the constraint set,
which specifies the allowed values for the long-term average cost.
A specific common case is that of a linear constraint on each cost
component, that is
$
\Gamma = \l\{c \in \RR^{s }: \text{ } c_i \leq \gamma_i, \text{ } i=1,...,s  \r\}
$
for some given vector $\gamma \in \RR^{s }$.
The constraint set is assumed to be {\em feasible} (or \emph{not excludable}), in the sense that
for every $q \in \Delta(\Z)$, there exists $p \in \Delta(\A)$ such that
$c(p, q) \in \Gamma$.

Let
$
\bar{C}_n \triangleq n^{-1} \sum_{k = 1}^n c_k
$
denote, as before, the average cost by time $n$. The agent is required
to satisfy the cost constraints, in the sense that
$\lim_{n \rightarrow \infty} d(\bar{C}_n, \Gamma) = 0$ must hold, irrespectively of the opponent's
play.
Subject to these constraints, the agent wishes to maximize its average reward $\bar{U}_n$.

We observe that a concrete learning application for the constrained regret minimization problem was proposed in \citet{lit:nips_me}. There, we
considered the on-line problem of merging the output of multiple binary classifiers,
with the goal of maximizing the true-positive rate, while
keeping the false-positive rate under a given threshold $0 < \gamma < 1$.
As shown in that paper, this problem may be formulated as a constrained regret minimization problem.

A natural extension of the best-reward-in-hindsight $u^*(q)$ in (\ref{eqn:rih_basic}) to the constrained setting is given by
\begin{equation}
\label{def:cbe}
u^*_{\Gamma}(q) \triangleq \max_{p \in \Delta(\A)} \l\{u(p, q) \,:\, c(p, q) \in \Gamma \r\}.
\end{equation}
We can now define the satisficing payoff set of the pairs $v = (u, c) \in \RR^{1 + s}$ in terms of $u^*_{\Gamma}(q)$ and $\Gamma$:
\[
V^*(q) \triangleq \l\{v = (u, c) \in \RR^{1 + s}: u \geq u^*_{\Gamma}(q), c \in \Gamma\r\}.
\]
Note that $u^*_{\Gamma}(q)$ is \emph{not} convex in general, and consequently $V^*(q)$ is not convex as well.
Indeed, it was shown in \citet{lit:shie_constr} that $V^*(q)$ is not approachable in general.
The convex hull of $V^*(q)$ may be written as
\begin{equation} \label{eqn:crm_ch}
V^c(q) = \l\{(u, c) \in \RR^{s + 1}: \text{ } u \geq \conv \l(u^*_{\Gamma}\r)(q), \text{ } c \in \Gamma \r\},
\end{equation}
where the function $\conv \l(u^*_{\Gamma}\r)$ is the lower convex hull of $u^*_{\Gamma}$.

Two algorithms were proposed in \citet{lit:shie_constr} for attaining $V^c(q)$. The first is a standard (Blackwell's) approachability algorithm for $S = \{(v,q)\,:\, v \in V^c(q)\}$, which requires the demanding calculation of projection directions to $S$.
The second algorithm employs a best-response to calibrated forecasts of the opponent's mixed actions. As mentioned in the introduction, obtaining these forecasts is computationally hard. In contrast, our algorithm only requires the computation of the response $p^*(q)$ as any maximizing action in (\ref{def:cbe}).
Similarly to the case of global cost functions, step 4 of Algorithm \ref{algo:gen_no_regret} boils down to solving the optimization problem in
\eqref{def:cbe} for $q = q^*_n$. Note that $p^*_n$ can be efficiently computed for a given $q^*_n$ since \eqref{def:cbe} is a convex program in general, while it is a linear program whenever the constraints set is a polyhedron.

\begin{remark}
Note that since $V^c(q)$ is unbounded in the direction of its first coordinate $u$, the algorithm variant presented in Subsection \ref{sec:inf} can be applied. In this case, the first coordinate of the steering direction ${\lambda}_n$
can be set to zero in $\tilde{\lambda}_n$  whenever it is negative,
which corresponds to $\bar{u}_{n-1} \geq \bar{u}^*_{n-1}$, thereby avoiding
an unnecessary reduction in $\bar{u}_{n-1}$. Similarly, for linear constraint sets
of the form $\{c_i\leq \gamma_i\}$, the $c_i$-coordinate of ${\lambda}_n$ may be nullified whenever $[\bar{c}_{n-1}]_i \leq [\bar{c}^*_{n-1}]_i$.

A similar modification can be applied also in the reward-to-cost problem of Section \ref{sec:ratio}. That is, the $u$-coordinate of ${\lambda}_n$
can be set to zero whenever $\bar{u}_{n-1} \geq \bar{u}^*_{n-1}$, while the $c$-coordinate of ${\lambda}_n$ may be nullified whenever $\bar{c}_{n-1} \leq \bar{c}^*_{n-1}$.
\end{remark}

\section{Conclusion} \label{sec:conc}

We have introduced in this paper a class of approachability algorithms that are based on Blackwell's dual, rather than primal, approachability condition.
The proposed algorithms rely directly on the availability of a response function,
rather than projection onto the goal set (or related geometric quantities),
and are therefore convenient
in certain problems where the latter may be hard to compute.
At the same time, the additional computational requirements are generally comparable
to those of the standard Blackwell algorithm and its variants.

The proposed algorithms were applied to a class of generalized no-regret problems,
that includes reward-to-cost maximization, and reward maximization subject to
average-cost constraints.
The resulting algorithms are apparently the first computationally efficient algorithms in this generalized setting.

In this paper we have focused on a repeated \emph{matrix game} model, where the
action sets of the agent and the adversary in the stage game are both discrete.
It is worth pointing out that the essential results of this paper also apply
directly to models with \emph{convex} action sets are convex,
say $x\in X$ and $y\in Y$,
and the vector reward function $r(x,y)$ is bilinear in its arguments.
In that case the (observed) actions $x$ and $y$ simply take the place of
the mixed actions $p$ and $q$, leading to similar algorithms and
convergence results. The continuous-action model is of course relevant
to linear classification and regression problems.

Other extensions of possible interest for the response-based algorithms
suggested in this paper include stochastic game models, problems of
partial monitoring, and possibly nonlinear (concave-convex) reward functions.
These are left for future work.

\section*{Acknowledgements}
We wish to thank Shie Mannor for useful discussions, and for
pointing out the application to regret minimization with global
cost functions. We further thank Elad Hazan for helpful comments
on the Appendix. This research was supported by the Israel Science
Foundation grant No.\ 1319/11.

\bibliography{bibliography}




\section*{Appendix A}

We outline in this Appendix a somewhat more direct version of the
no-regret based approachability algorithms  proposed in \citet{ABH10}.
This version avoids the lifting procedure used in that paper, that
treats general (convex) target sets by lifting them to convex cones in
higher dimension.
This brief outline is meant to highlight the geometric nature and requirements of this
class of algorithms.

Let $S \subseteq \RR^{\ell}$ be the convex and closed target set to be approached. Let $h_S$ denote the \emph{support function} of $S$:
\[
h_S(\theta) \triangleq \sup_{r \in S} (\theta \cdot r), \quad \theta \in \RR^{\ell}.
\]
Note that $h_S$ is a convex function. The Euclidean distance from a point $r$ to $S$
may be expressed as
\begin{equation} \label{eqn:dist_support}
d(r, S) = \max_{\theta \in B_2(1)} \l\{\theta \cdot r - h_S(\theta) \r\},
\end{equation}
where $B_2(1)$ is the Euclidean unit ball, $B_2(1) = \{\theta \in \RR^{\ell} \, : \, \theta \cdot \theta \leq 1 \}$ (see \citet{lit:Rockafeller70}, Section 16;
this equality can also be verified directly using the minimax theroem).

Blackwell's (primal) separation condition can now be written as follows\footnote{
We use here the notations and formulation of the present paper, where $p$ and $q$
are mixed actions in their respective simplices.
However, the following observations are valid also
for the case where $p$ and $q$ are (observed) actions in bounded convex sets and
$r(p,q)$  a bilinear function thereof, as considered in \cite{ABH10}.}:
\begin{itemize}
\item
For each $\theta\in B_2(1)$ there exist $p\in\Delta(\Z)$ so that, for every
$q \in \Delta(\Z)$,
\[
 \theta \cdot r(p, q) \leq \sup_{r \in S} \theta \cdot r \equiv h_S(\theta) \,,
\]
that is,
\begin{equation} \label{eqn:black_nr}
 \theta \cdot r(p, q) - h_S(\theta) \leq 0.
\end{equation}
\end{itemize}

An approachability algorithm can be devised as follows. Observe that the function $f_{r}(\theta) = \theta \cdot r - h_S(\theta)$ is \emph{concave} in $\theta$ (for each $r$). Hence, an online concave programming algorithm applied to the sequence
of functions $(\theta\cdot r_n -h_S(\theta))$, with $r_n$ arbitrary (bounded) vectors, will produce a sequence of steering directions $\{\theta_n\}$ in $B_2(1)$  so that
\begin{equation} \label{eqn:appr_nr_bound}
\frac{1}{n} \sum_{k = 1}^n \l(\theta_k \cdot r_k - h_S(\theta_k) \r)
\geq \max_{\theta \in B_2(1)} \l\{\theta \cdot \bar{r}_n - h_S(\theta) \r\} - o(1).
\end{equation}
Now, observing (\ref{eqn:black_nr}), one can choose each $p_n$ so that
$r_n=r(p_n,z_n)$ satisfies $\theta_n \cdot r_n - h_S(\theta_n) \leq 0$.
Substituting in (\ref{eqn:appr_nr_bound}) we obtain
\[
\max_{\theta \in B_2(1)} \l\{\theta \cdot \bar{r}_n - h_S(\theta) \r\} \leq o(1).
\]
Hence, by \eqref{eqn:dist_support}, $d(\bar{r}_n, S) \leq o(1)$.

Observe that the above scheme applies an online concave programming algorithm to the functions $f_r(\theta)$, that are defined through the support function $h_S(\theta)$.
Thus, it essentially requires computing the support function $h_S$ (or its derivative) at some point in each stage of the game.

To be specific, let us apply the gradient ascent algorithm of \cite{lit:grad_covex}
to the problem. The resulting approachability algorithm proceeds as follows.
\begin{itemize}
\item[1.] At stage $n$, we start with $\theta_{n-1}$, $p_{n-1}$, $r_{n-1}$, $z_{n-1}$  from the previous stage.
\item[2.] Let
\begin{align}
\theta_n & =\text{Proj}(\theta_{n-1} -
 \eta_{n} \nabla_{\theta}(\theta_{n-1}\cdot r_{n-1} - h_S(\theta_{n-1})) \nonumber \\
 & =
 \text{Proj}(\theta_{n-1} -
 \eta_{n} (r_{n-1} - \nabla h_S(\theta_{n-1}))
 \end{align}
where $\text{Proj}$ is then projection onto the unit ball.
\item[3.]
Choose $p_n$ according to \eqref{eqn:black_nr}, so that
$\theta_n\cdot r(p_n,z) - h_S(\theta_n) \leq 0$ for all $z\in \Z$.
\item[4.]
Observe the opponent's action $z_n$, and set $r_n=r(p_n,z_n)$.
\end{itemize}

\end{document}